\documentclass{article}

\usepackage{amsmath,amsfonts,bm}

\def\eqref#1{equation~\ref{#1}}

\def\1{\bm{1}}

\DeclareMathAlphabet{\mathsfit}{\encodingdefault}{\sfdefault}{m}{sl}
\SetMathAlphabet{\mathsfit}{bold}{\encodingdefault}{\sfdefault}{bx}{n}

\usepackage{microtype}
\usepackage{graphicx}
\usepackage{subfig}
\usepackage{booktabs} 
\usepackage[dvipsnames]{xcolor}         
\usepackage{diagbox}
\usepackage{adjustbox}
\usepackage{wrapfig}
\usepackage{caption}
\usepackage{multirow}
\usepackage{makecell}
\usepackage{pifont}
\usepackage{enumitem}
\usepackage{arydshln}
\usepackage{colortbl}

\usepackage{hyperref}

\newcommand\figcaption{\def\@captype{figure}\caption}
\newcommand\tabcaption{\def\@captype{table}\caption}

\makeatletter
\def\hlinew#1{%
\noalign{\ifnum0=`}\fi\hrule \@height #1 \futurelet
\reserved@a\@xhline}
\makeatother%

\usepackage[accepted]{icml2026}
\usepackage{amsmath}
\usepackage{amssymb}
\usepackage{mathtools}
\usepackage{amsthm}

\usepackage[capitalize,noabbrev]{cleveref}

\theoremstyle{plain}
\newtheorem{theorem}{Theorem} 
\newtheorem{proposition}[theorem]{Proposition}
\newtheorem{lemma}[theorem]{Lemma}
\newtheorem{corollary}[theorem]{Corollary}
\theoremstyle{definition}

\theoremstyle{remark}
\newtheorem{remark}[theorem]{Remark}

\usepackage[textsize=tiny]{todonotes}

\icmltitlerunning{CARE: Class-Adaptive Expert Consensus for Reliable Learning with Long-Tailed Noisy Labels}

\begin{document}

\twocolumn[
  \icmltitle{CARE: Class-Adaptive Expert Consensus for Reliable Learning with Long-Tailed Noisy Labels}




  \begin{icmlauthorlist}
    \icmlauthor{Mengke Li}{szu}
    \icmlauthor{Haiquan Ling}{gml,szu}
    \icmlauthor{Lihao Chen}{gml,szu}
    \icmlauthor{Yang Lu}{xmu}
    \icmlauthor{Yiqun Zhang}{gdut}
    \icmlauthor{Hui Huang*}{szu}
  \end{icmlauthorlist}

  \icmlaffiliation{szu}{College of Computer Science and Software Engineering, Shenzhen University, Shenzhen, China.}
  \icmlaffiliation{gml}{Guangming Laboratory, Shenzhen, China.}
  \icmlaffiliation{xmu}{School of Informatics, Xiamen University, Xiamen, China.}
  \icmlaffiliation{gdut}{School of Computer Science and Technology, Guangdong University of Technology, Guangzhou, China}

  \icmlcorrespondingauthor{Hui Huang}{hhzhiyan@gmail.com}

  \icmlkeywords{Machine Learning, ICML}

  \vskip 0.3in
]



\printAffiliationsAndNotice{}  
\begin{abstract}

Learning from real-world data is frequently hindered by the compound challenge of long-tailed class distributions and noisy annotations. 
Existing methods partially address these issues but typically ignore the non-uniform impact of label noise across classes, resulting in ineffective correction for tail classes and over-regularization for head classes.
To address this issue, we propose Class-Adaptive Rectification with Experts (CARE), a parameter-efficient framework that leverages three complementary supervision sources from vision-language models (VLM): observed noisy labels, VLM text embeddings, and visual features.
CARE introduces a class-adaptive expert consensus mechanism that enforces stricter agreement for tail classes and more permissive agreement for head classes based on class frequency.
By aggregating high-confidence predictions across these sources, CARE filters unreliable signals and recalibrates class distributions, yielding more reliable rectification under long-tailed distributions.
Extensive experiments on both synthetic and real-world benchmarks demonstrate that CARE consistently outperforms state-of-the-art methods, achieving up to 3.0\% performance gains.
The source code is available at \href{https://github.com/qwq123-study/CARE}{https://github.com/qwq123-study/CARE}.
\end{abstract}
\section{Introduction}  
\label{sec:intro}
Deep learning models rely heavily on large-scale datasets that are not only accurately annotated but also approximately class-balanced to ensure robust generalization~\citep{adjustment21}.
The rise of large-scale foundation models in recent years has further amplified this demand for high-quality labeled data~\citep{DongZYZ23lpt,liGNM2024}, as insufficient label quality may introduce unintended prior biases during training.
However, such ideal conditions are rarely met in real-world scenarios. 
Accurate labeling is often prohibitively time-consuming and labor-intensive, while data acquisition pipelines inherently introduce imbalance due to varying collection difficulty across classes~\citep{kevin2022imbanlance, hoyer2022daformer}. 
Rare or under-represented categories, often referred to as tail classes, are particularly difficult to collect, leading to extreme data skew~\citep{NIPS2017MetaLT}. 
Consequently, real-world datasets are inherently challenged by the co-occurrence of pervasive annotation errors (noisy labels~\citep{xiao2015learning}) and extreme class imbalance (long-tailed distributions~\citep{zhang2023survey}), which synergistically degrade model generalization and severely impair recognition performance, especially on tail classes.
When tackled separately, long‑tail learning methods assume clean labels and thus may amplify mislabeling in scarce tail classes, while noisy‑label learning methods assume roughly balanced classes and tend to discard noisy samples, thereby losing valuable information in under‑represented categories~\citep{zhang2024ssbl}. 

\begin{figure*}[!t]
\centering
\subfloat[Ground-truth label distribution (GTLD).]{ 
    \includegraphics[width=0.43\textwidth]{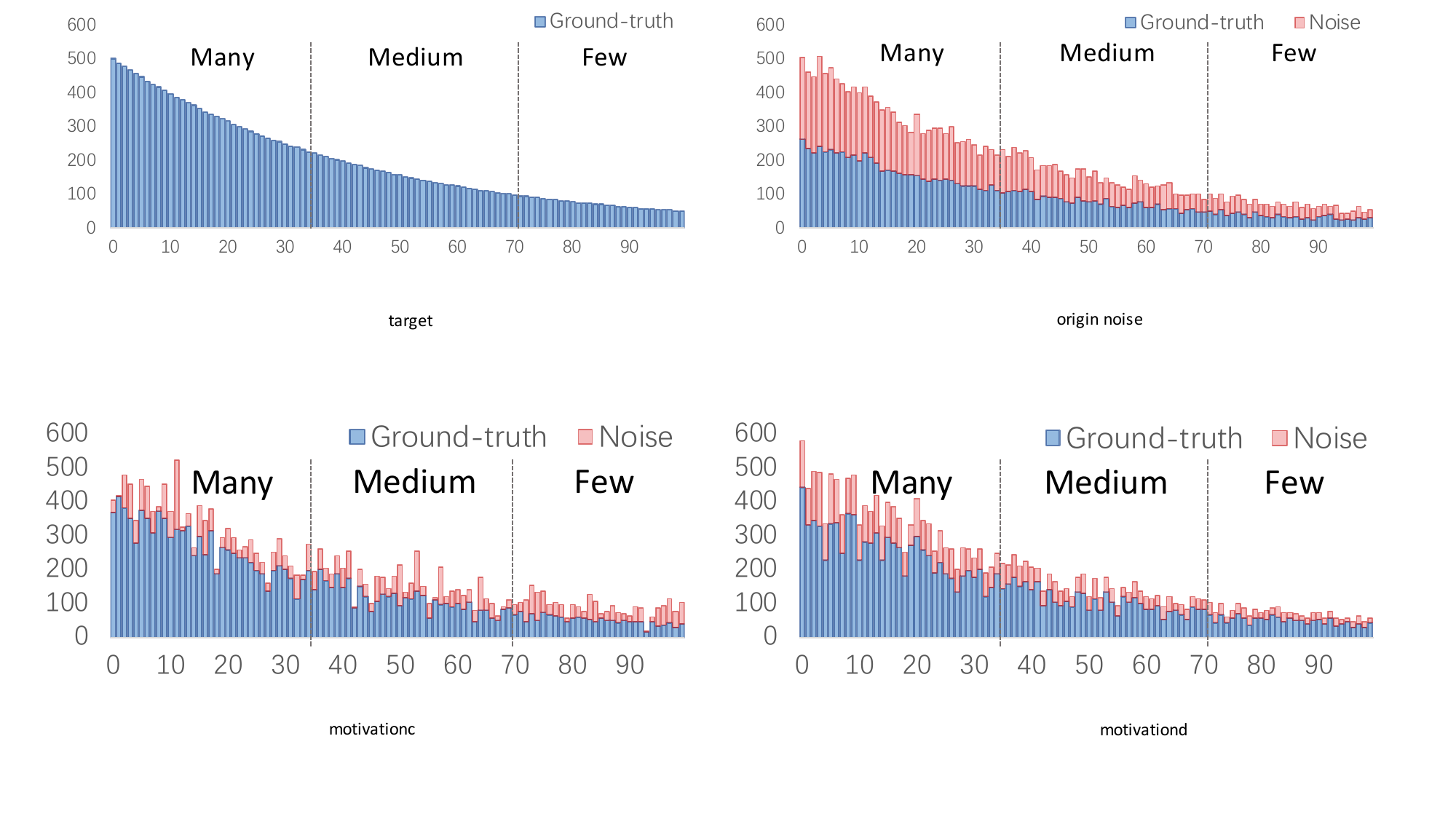}
    \label{fig:intro_la_target}
    } \hfill
\subfloat[Observed label distribution (OLD).]{
    \includegraphics[width=0.43\textwidth]{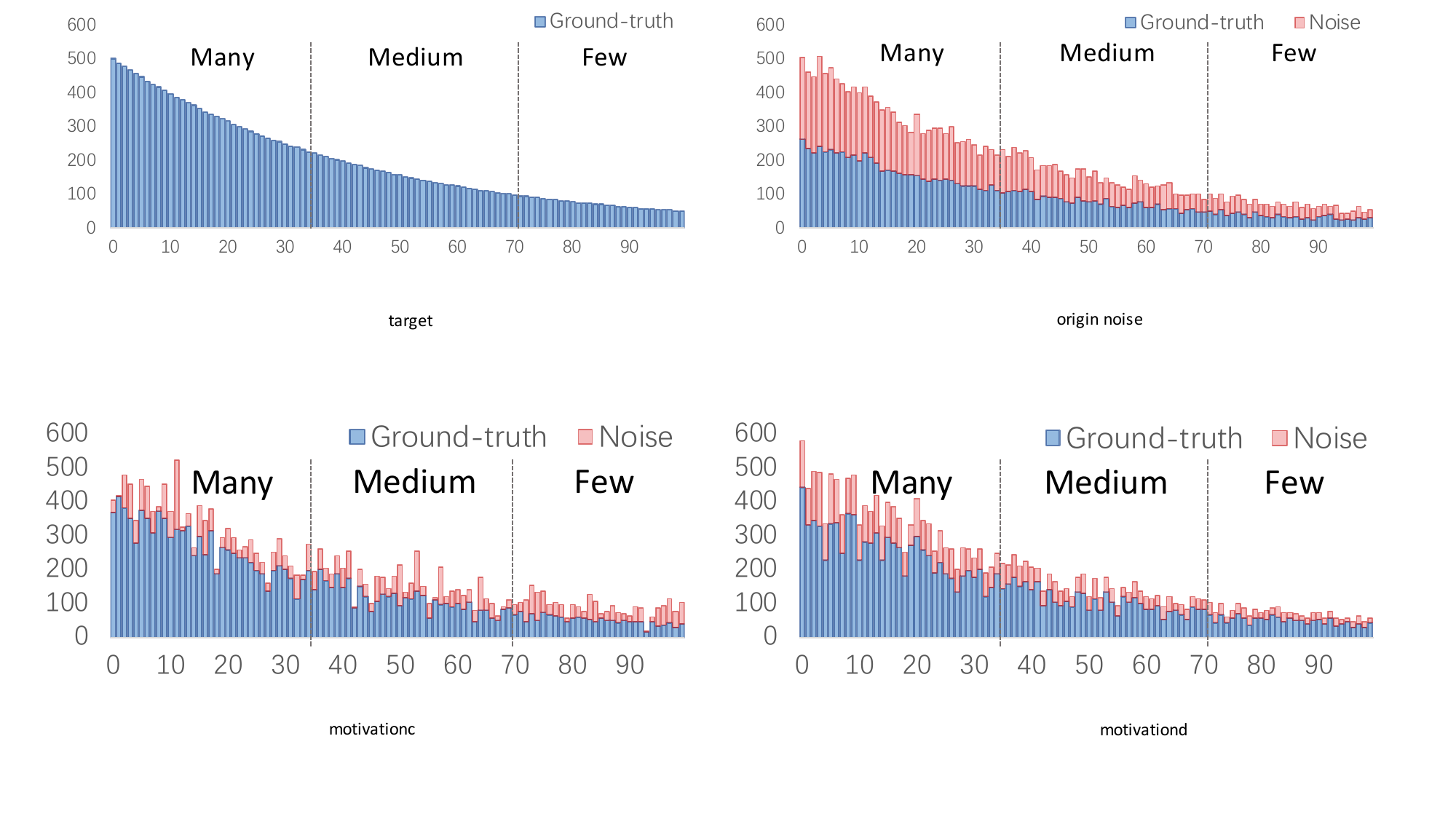}
    \label{fig:intro_la_origin_noise}
    } \\
\subfloat[RLD 1 by class-agnostic LC.]{
    \includegraphics[width=0.3\textwidth]{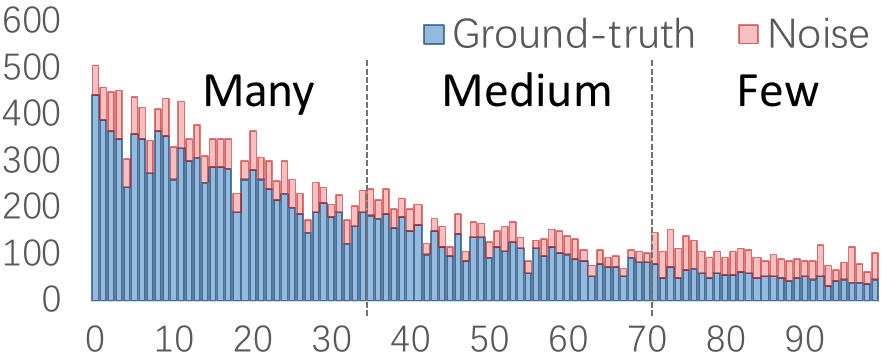}
    \label{fig:intro_la_repair_h2}
    } \hspace{0.2em}
\subfloat[RLD 2 by class-agnostic LC. ]{
    \includegraphics[width=0.3\textwidth]{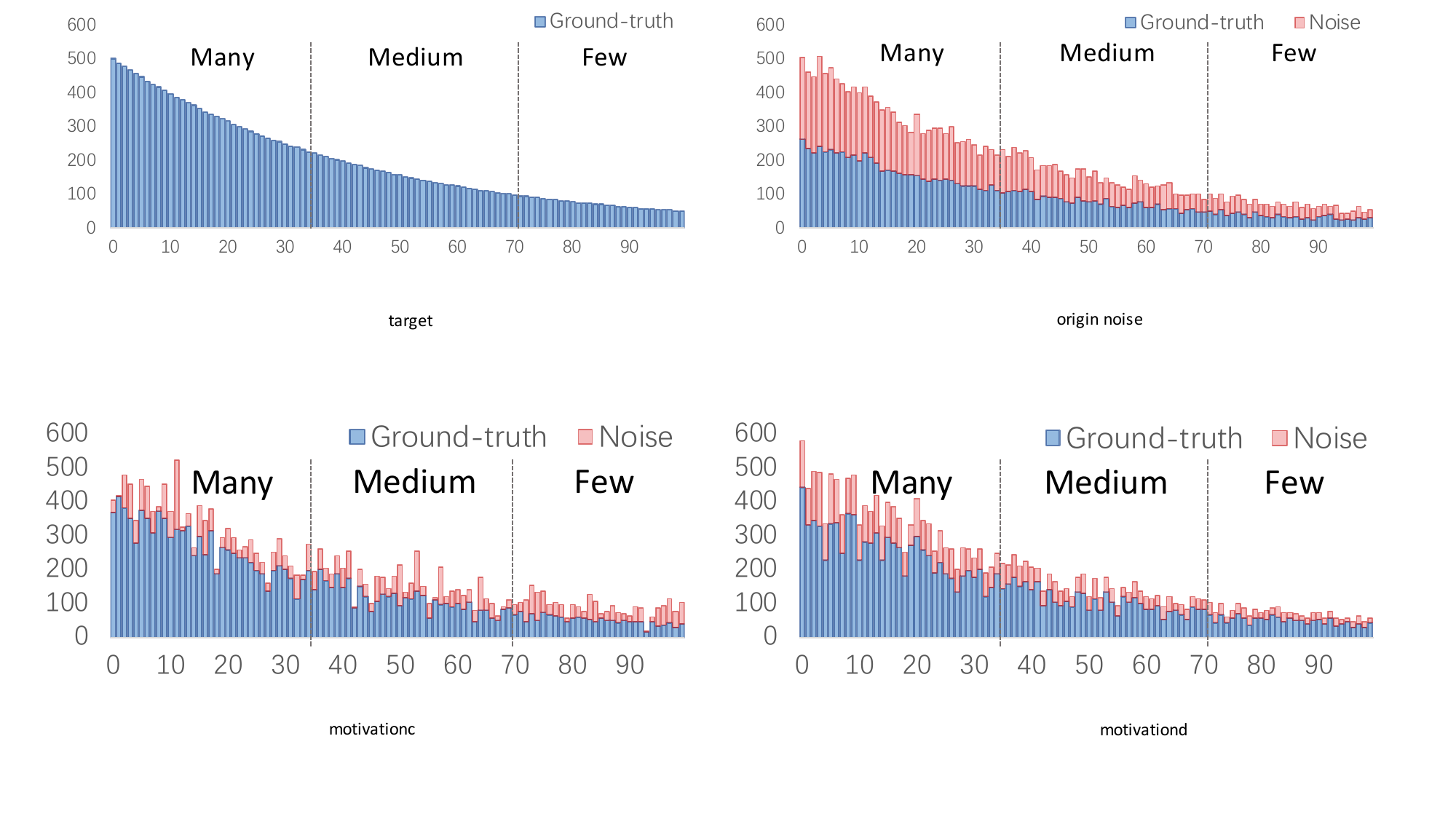}
    \label{fig:intro_la_repair_h}
    } \hspace{0.2em}
\subfloat[RLD 3 by class-aware LC.]{
    \includegraphics[width=0.3\textwidth]{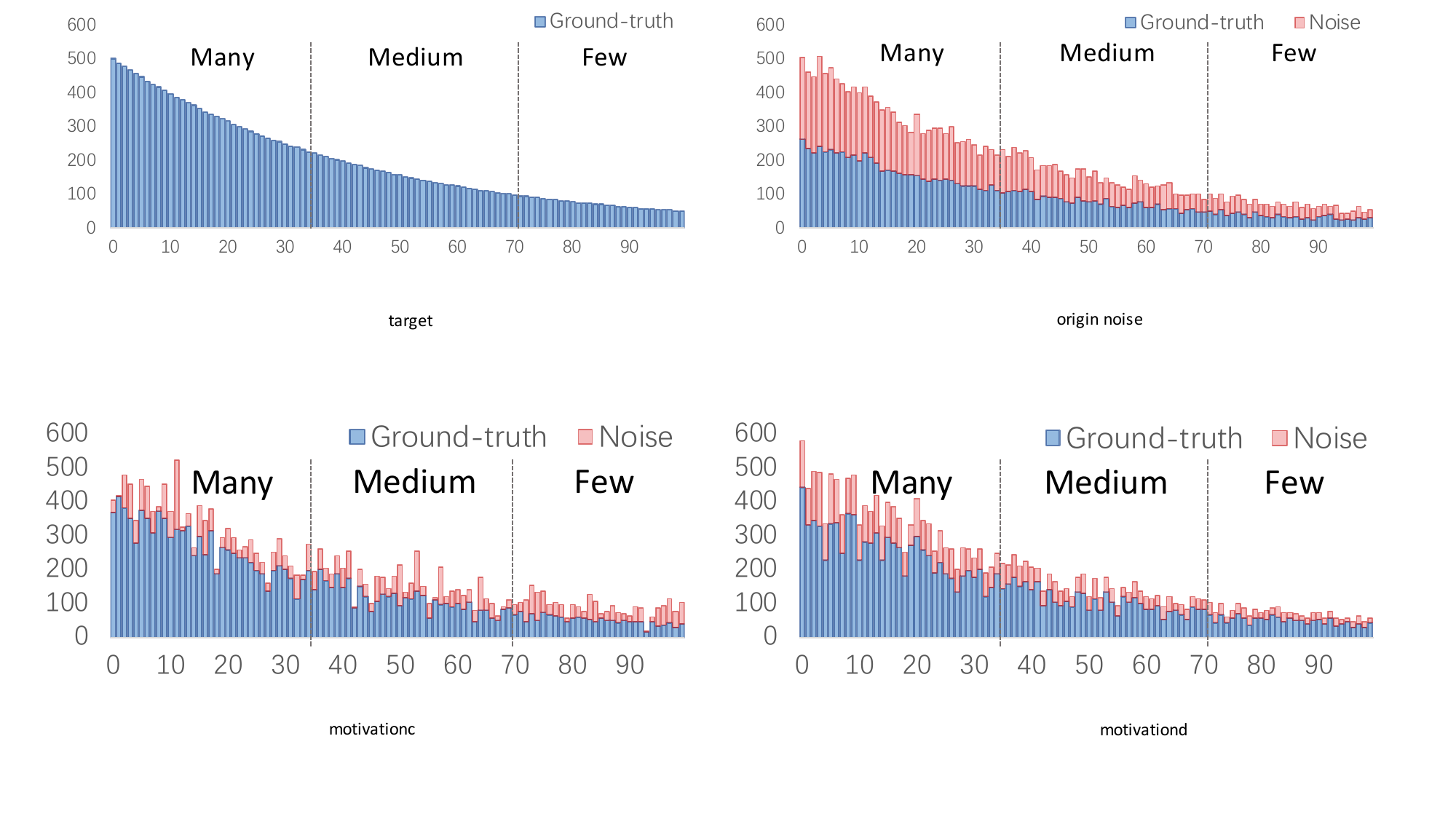}
    \label{fig:intro_la_repair_all}
    }
\caption{Class-wise label distribution comparison on CIFAR-100. (a) The ground-truth distribution with an imbalance factor of 10. (b) The distribution under 50\% joint noise. (c-e) Rectified label distributions (RLDs) obtained by different label correction (LC) methods.
}
\label{fig:intro_la}
\vspace{-1.5em}
\end{figure*}

To address the practical and pervasive challenge of learning from data with both label noise and long-tailed distributions, the concept of long-tailed noisy label (LTNL) learning has recently been introduced~\citep{lu2023tabasco,zhang2023rcal}.
Several recent studies have explored this joint setting, with some methods incorporating long-tail characteristics into the noise detection process.
A common strategy involves first identifying noisy samples, followed by applying long-tailed learning techniques~\citep{yi2022identifying, zhang2023rcal}.
However, such sequential pipelines critically depend on the performance of the specifically designed noise detectors, which are typically tailored to either noise robustness or long-tailed data, limiting their effectiveness in addressing both issues simultaneously. 
For example, \cite{chen2025robustla} propose a Robust Logit Adjustment (RLA), which first corrects the noisy labels and then leverages the refurbished label distribution for logit adjustment, yet still relies on a class-agnostic label correction strategy. 
Moreover, most existing methods simplify the problem by assuming a uniform noise rate across classes, ignoring the more realistic scenario where tail classes tend to exhibit higher noise levels. 
\cite{lu2023tabasco} take a more flexible view by considering discrepancies between the observed and intrinsic class distributions, highlighting the need for more flexible and class-aware solutions.


\begin{table}[!t] 
    \centering
    \caption{Acc. (\%) comparison across different rectified labels.}\label{tab:acc_comp}   
    \resizebox{1.\linewidth}{!}{
    \begin{tabular}{l|c|cccc}
    \toprule
     Lable distribution & $NR$   & Head & Med. & Tail & All \\
    \midrule
    GTLD (\cref{fig:intro_la_target})       & 0    & 86.2 & 84.9 & 82.9 & 84.8\\
    \midrule
    OLD (\cref{fig:intro_la_origin_noise})  & 50\% & 79.6 & 79.9 & 78.8 & 79.5\\
    RLD 1 (\cref{fig:intro_la_repair_h2}) & 24\% & 80.6 & 79.7 & 78.2 & 79.6\\
    RLD 2 (\cref{fig:intro_la_repair_h}) & 25\% & 77.0 & 76.5 & 77.0 & 76.8\\
    RLD 3 (\cref{fig:intro_la_repair_all}) & 28\% & 81.4 & 80.9 & 80.2 & 80.9\\
    \bottomrule
    \end{tabular}}
    \vspace{-1.5em}
\end{table}

Although class-agnostic label refurbishment methods show promise in reducing overall noise, their effectiveness under long-tailed scenarios is limited and can even degrade performance in certain cases. 
In particular, we preliminarily adopted the label correction strategy from \cite{chen2025robustla}, generating two rectified label distributions under different hyperparameters, as illustrated in \cref{fig:intro_la_repair_h2} and \cref{fig:intro_la_repair_h}. 
For comparison, we also present the ground-truth label distribution, the initial 50\% joint noise distribution, and the CARE-refined class-aware label distribution in \cref{fig:intro_la_target}, \cref{fig:intro_la_origin_noise}, and \cref{fig:intro_la_repair_all}, respectively. 
Fine-tuning CLIP with these three types of noisy-labeled data yields the results shown in \cref{tab:acc_comp}.
Despite achieving similar overall noise rates and significantly reducing label noise compared to the original noisy labels, the results reveal that insufficient correction of tail-class labels (e.g., \cref{fig:intro_la_repair_h2}) yields limited performance gains (79.6\% vs.  79.5\% ), or even leads to degradation, as seen in \cref{fig:intro_la_repair_h} vs. \cref{fig:intro_la_origin_noise} (total accuracy 76.8\% vs. 79.5\%). 
The decline stems from inaccurate regularization introduced during long-tailed learning.
This observation shows that merely lowering the noise ratio is not sufficient, and class-aware rectification is essential. 
In contrast, by emphasizing accurate tail label correction, RLD 3 (shown in  \cref{fig:intro_la_repair_all}) ensures that the strong calibration induced by tail classes is meaningful, while the weaker regularization from uncorrected head labels avoids over-regularization and/or over-calibration. 
This class-aware strategy leads to more balanced supervision and improves overall robustness.

To this end, we propose a novel framework named Class-Adaptive Rectification with Experts (CARE) for robust learning under long-tailed noisy label (LTNL) settings.
CARE leverages three complementary sources of information, namely, observed labels, Vision-Language Model (VLM)-based text embeddings, and VLM-based image features, to derive class-adaptive expert consensus for robust label rectification. 
Without introducing additional parameters, CARE identifies low-noise candidates through adaptive class-wise Top-$K$ voting, where larger $K$ values are used for head classes and smaller $K$ for tail classes to reflect their respective data abundances and reduce confirmation bias in underrepresented categories. 
The expert-consensus semantics are then dynamically integrated into class frequency estimation, while low-confidence predictions are filtered through a strict hierarchical filtering mechanism that enforces consistency across classes. 
This enables CARE to produce reliable label distributions and perform effective noise correction.
Our main contributions are summarized as follows:
1) We identify that ignoring tail label noise, while significantly reducing overall noise rate, fails to improve overall performance under long-tailed distributions. 
2) We propose a parameter-free CARE framework that unifies noisy label correction and long-tailed calibration for improved generalization.
3) We design a class-aware consensus mechanism that dynamically adjusts correction confidence based on class frequency and inter-expert agreement.
4) Extensive experiments on both synthetic and real-world LTNL benchmarks demonstrate that CARE consistently outperforms state-of-the-art methods under various noise and imbalance settings. 


\vspace{-0.5em}
\section{Proposed Method} \label{sec:Meth}
\vspace{-0.5em}
\subsection{Preliminaries} \vspace{-0.5em}
\noindent\textbf{Basic Notations.}
Given a $C$-class classification problem, where the training set is denoted as $\mathcal{D}_{train} = \{ (x_i, \tilde{y}_i) \}_{i=1}^N$, with $x_i$ representing an input image and $\tilde{y}_i \in \tilde{\mathcal{Y}}$ being the corresponding observed label. 
Due to annotation noise, the observed label set $\tilde{\mathcal{Y}}$ may differ from the unknown ground-truth label set $y\in \mathcal{Y}$, and such discrepancies are regarded as noisy labels.
Without loss of generality, classes are indexed in descending order of sample frequency $n_i$, i.e., $n_1 > n_2 > \ldots > n_C$, with $n_1 \gg n_C$, characterizing the long-tailed nature of the data distribution.
Throughout this paper, subscripts are used to indicate sample or class indices.

\noindent\textbf{Problem Analysis.}
In long‑tailed noisy‑label learning, the combination of sample scarcity and annotation noise severely degrades tail-class performance~\citep{sheng2024foster}. 
Existing methods often apply class-agnostic corrections that fail to distinguish intrinsic uncertainty (from few samples) and actual noise, leading to over-filtering of tail data and biased learning~\citep{shu2025classifying, zhang2023rcal, lu2023tabasco}. 
Moreover, long-tail rebalancing techniques may amplify the impact of mislabeled tail samples~\citep{adjustment21, LiMK2022KPS}. 
These issues hinder both label refinement and distribution calibration (see Appendix~\ref{app_sec:analysis} for detailed analysis).
We propose \textbf{CARE} that refines noisy labels by computing high-confidence consensus across multiple experts. 
By explicitly enhancing tail-class label reliability and disentangling class uncertainty from annotation noise, CARE enables more accurate and robust long-tail distribution calibration.

\subsection{Method Overview.} 
CARE integrates three complementary sources of information to estimate reliable class distributions:
1) \textit{Text Expert (TE).} A CLIP‑based text encoder~\citep{radford2021clip} predicts class confidences from the semantic description of each label (no additional fine‑tuning).
2) \textit{Image Expert (IE).} A CLIP‑based image encoder generates visual‑semantic class confidences for each input image, fine-tuned via AdaptFormer~\citep{Chen2022adaptformer} for downstream tasks.
3) \textit{Observed‑label expert} is referred to as Base Expert (BE). 
BE utilizes the original (noisy) annotations, which nonetheless retain valuable class-indicative information.
CARE aggregates class-wise confidence scores from all three experts using a relevance-weighted strategy to compute a consensus distribution. 
This yields a refined label distribution that simultaneously 1) down‑weights unlikely noisy assignments in tail classes, 2) preserves high‑confidence head‑class labels, and 3) approximates the true underlying class frequencies. 
The resulting corrected labels exhibit substantially reduced noise rates and a distribution closer to the intrinsic (clean) data distribution. 
Finally, standard long-tail calibration methods, for example, logit adjustment~\citep{adjustment21} can be applied. 

\subsection{Composition of Parameter-Free Experts} 
This approach integrates signals from three distinct experts (text, image, and observed-label), leveraging their complementary strengths to identify high-confidence candidate classes, rather than relying on a single source.

\textbf{Text Expert (TE).}
We leverage the text encoder of a pre-trained VLM, such as CLIP~\citep{radford2021clip}, to embed each class label into a shared visual-semantic space and obtain $\{\mathbf{t}_{c}\}_{c=1}^C$. 
This encoder captures rich prior knowledge from large-scale pre-training and aligns textual semantics with visual representations. 
By computing the similarity between embedded label texts and image features, the model can estimate class-wise confidence scores without requiring additional fine-tuning. 
This allows TE to provide robust semantic guidance and highlight potentially correct classes, unaffected by the corrected observed labels.
For an input image ${x}$\footnote{We omit subscript notation for clarity.}, TE provides the confidence vector $\mathbf{p}^{\text{TE}} = \{p_c^{\text{TE}}\}$ based on the cosine similarity between the general image feature $\mathbf{\hat{f}}$ (extracted from the pretrained image encoder) and the corresponding text features $\mathbf{t}_{c}$:
\begin{equation} 
   p_c^{TE} =  \frac{\exp(z^{\text{TE}}_{c})}{\sum_{j=1}^C \exp(z^{\text{TE}}_{j})}, \: \text{with} \: 
    z^{\text{TE}}_{c}  =  s \cdot \frac{{ \mathbf{t}_{c}^\top  \mathbf{\hat{f}}}}{ \|\mathbf{t}_{c} \| \|\mathbf{\hat{f}}\|},
\label{eq:TE_sim}
\end{equation}
where $s$ is the scale parameter.  
The softmax-normalized similarity yields a semantic prior over class labels based on text-image alignment, without additional fine-tuning.

\textbf{Image Expert (IE).}
\begin{figure*}[!t]
    \centering
    \includegraphics[width=0.95\linewidth]{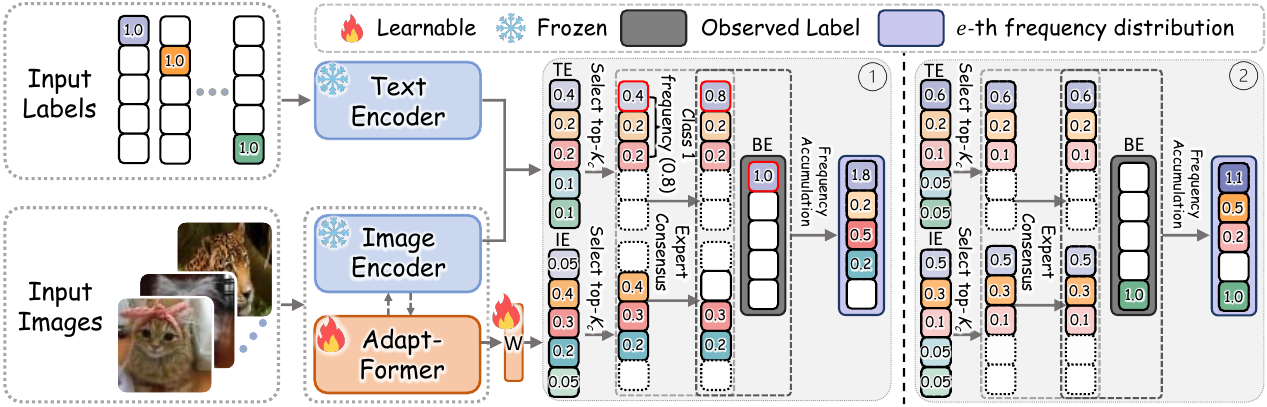}
    \caption{Overview of CARE. 
    Different colors in the label denote classes, and intensity reflects confidence.
    \ding{172} illustrates expert consensus (TE and BE agree on Top-1 of TE).
    The Top-1 confidence of TE that is refined by removing unlikely classes is assigned to the consensus class (blue).
    Top-2 and Top-3 confidences, lacking consensus, are individually accumulated. 
    \ding{173} depicts a no-consensus case, where each expert contributes confidence to their top predictions.}
    \label{fig:framework}
    \vspace{-1em}
\end{figure*}
The image expert is derived from the fine-tuned CLIP image encoder, which produces task-adapted visual representations $\mathbf{f}$ for each input image $x$.  
To enhance adaptability to the downstream tasks while maintaining efficiency, we adopt AdaptFormer~\citep{Chen2022adaptformer} for parameter-efficient fine-tuning. 
It is worth noting that this fine-tuning is performed solely for the final classification purposes, and the resulting adapted image encoder is reused for label correction without introducing additional training objectives. 
IE provides the confidence vector $\mathbf{p}^{\text{IE}} = \{p_c^{\text{IE}}\}$ by  the similarity between $\mathbf{f}$ and the class-specific weights $\mathbf{w}_c$ of the linear classifier $\mathbf{W}$:
\begin{equation}
    p_c^{\text{IE}} = \frac{\exp(z_c^{\text{IE}})}{\sum_{j=1}^C \exp(z_j^{\text{IE}})}, \quad \text{where} \quad
    z_c^{\text{IE}} = s \cdot \frac{\mathbf{w}_c^\top \mathbf{f}}{\|\mathbf{w}_c\| \|\mathbf{f}\|}.
\label{eq:IE_sim}
\end{equation}
IE leverages the fine-tuned visual representation tailored to the target domain and serves as a byproduct of the main training pipeline.
It remains parameter-free during the expert consensus phase.

\textbf{Observed-Label Expert (BE)}
The observed-label expert, namely the base expert, relies directly on the original (potentially noisy) label annotations. 
Despite their unreliability, these labels still contain valuable supervision, especially for head classes that contain a large number of accurate labels. 
BE treats the observed label as an indicator of class assignment and serves as a complementary weak signal in the consensus-building process.
The observed label $\hat{y}$ is represented as a one-hot confidence vector $\mathbf{p}^{{BE}} = \{p^{BE}_c\}$, where only the $\hat{y}$-th entry is set to 1.

\vspace{-0.5em}
\subsection{Class Aware Expert Consensus Accumulation} \vspace{-0.5em}
To address the limitations of class-agnostic label refurbishment, particularly its inability to differentiate between head and tail class noise, we introduce a class-aware expert consensus mechanism. 
We adopt a consensus strategy that accumulates class-wise support across experts, effectively refining the label distribution with an awareness of class frequency.

\textbf{Class Aware Expert Consensus.}
To evaluate the reliability of the observed label, we first identify the top-$K$ most confident predictions from both TE and IE:
\begin{equation}
    \mathcal{T}^*_K = \text{Top-K}(\mathbf{p}^*),
\end{equation}
where $*\in \{\text{TE}, \text{IE}\}$, $\mathbf{p}^*$ denotes the confidence scores from the respective expert.
To improve robustness against noisy labels and long-tailed class distributions, we introduce a class-adaptive Top-$K$ consensus mechanism. 
The number of top predictions considered, $K$ is adaptively determined based on the frequency of each class. Specifically, classes with higher sample counts (head classes) are assigned larger $K$, while those with fewer samples (tail classes) are assigned smaller $K$, allowing the model to allocate trust more appropriately across the class spectrum.
Formally, the class-specific $K$ is defined to be proportional to the class frequency: $K_c \propto n_c^e$, where $n_c^e$ denotes the sample count for class $c$ at epoch $e$. At the initial stage ($e=0$), $n_c^0$ is computed based on the observed labels. For $e>0$, $n_c^e$ is updated using the corrected labels from the previous training epoch.

\textbf{Class Frequency Accumulation.}
For each sample $x$ and class $c$, the accumulated frequency is updated by:
\begin{equation}
F_c^{(e)} =  F_c^{(e-1)} + \sum_{m \in \{\mathrm{TE},\,\mathrm{IE},\,\mathrm{BE}\}} \alpha_m(x)\, g_m(c),
\label{eq:freq_update}
\end{equation}
where $m$ indexes the experts (TE, IE, BE).
The class-wise contribution of expert $m$ is:
\begin{equation}
g_m(c) = \mathbb{I}\!\left[c \in \mathcal{T}_K^{m}\right]\, p_c^{m},
\end{equation}
meaning that expert $m$ contributes confidence only for classes in its Top-$K$ list.
The reliability weight assigned to expert $m$ for sample $x$ is:
\begin{equation}
\alpha_m(x) =
\begin{cases}
\displaystyle \sum_{j \in \mathcal{T}_K^{m}} p_j^{m}, & \text{if } \tilde{y} \in \mathcal{T}_K^{m}, \\
1, & \text{otherwise}.
\end{cases}
\end{equation}
If the observed label $\tilde{y}$ appears in expert $m$'s Top-$K$ predictions, the expert is treated as reliable and its confidence reinforces $\tilde{y}$.
Otherwise, only its Top-$K$ predictions are retained to avoid reinforcing a potentially noisy label.

\textbf{Noisy Label Correction and Long-Tail Calibration.}
In the dynamic accumulation process, the dynamically rectified label of sample $x$ is obtained from the accumulated class frequency distribution in each round and is used for model supervision.
Let $y^{r,(e)}$ be the corrected label at epoch $e$:
\begin{equation}
   y^{r,(e)} = \arg\max_c \mathbf{F}_c^{(e)}.
\label{eq:argmax}
\end{equation}
Based on the rectified labels, we can recount the distribution of labels $n^{r,(e)}_c$ for class $c$ that are relatively closer to the ground-truth label distribution:
\begin{equation}
n^{r,(e)}_c = \sum_{i=1}^{N} \mathbb{I}(y^{r,(e)} = c), \quad c = 1, 2, \dots, C,
\label{eq:rld}
\end{equation}
where $\mathbb{I}$ is the indicator function.
The rectified label distribution can be used in conjunction with various long-tail learning methods, for example, the logit adjustment (LA) loss~\citep{adjustment21} for optimization:
\begin{equation}
\begin{split}
  &  \mathcal{L}_{\text{LA}}  (x; y^{r,(e)} = c)  =  \\
  &  -\log \bigg( 
    \frac{ \exp\left(z^{\text{IE}}_c + \log \widehat{\mathrm{P}}(y^{r,(e)} = c)\right) }
   {\sum_{j=1}^{C} \exp\left(z^{\text{IE}}_j + \log \widehat{\mathrm{P}}(y^{r,(e)} = j)\right)} \bigg )
\end{split}
\end{equation}
where $\widehat{\mathrm{P}}(y^{r,(e)} = c)$ denotes the rectified class prior probability that is derived based on \cref{eq:rld}.

The label correction for a single input image in CARE, which is performed jointly during training, is illustrated in \cref{fig:framework}. 
The overall training algorithm is summarized in \cref{alg:consensus} in Appendix~\ref{app_sec:alg}. 

\vspace{-0.5em}
\subsection{Rationale Analysis} \vspace{-0.5em}
\label{sec:rationale}
\textbf{Noise Suppression via Expert Consensus.}
Multi-expert Top-$K$ consensus naturally suppresses noisy labels by leveraging agreement among experts. 
Since the true labels are more likely to consistently appear in Top-$K$ predictions across each expert, while random noisy labels are not, this mechanism acts as a robust filter, similar in spirit to ensemble-based noise suppression~\citep{LeeC20LEC}.
Formally, we state:
\begin{theorem}[Reliability Amplification via Class-aware Expert Consensus]\label{thm:EC}
Let $\mathbf{p}^{\text{TE}}$ and $\mathbf{p}^{\text{IE}}$ be conditionally independent, and assume the expected confidence on the true class $y$ is higher than any other class:
\begin{equation}
\mathbb{E}[p^{\text{TE}}_y] > \mathbb{E}[p^{\text{TE}}_c],\quad
\mathbb{E}[p^{\text{IE}}_y] > \mathbb{E}[p^{\text{IE}}_c],\quad \forall c \ne y.
\end{equation}
Then, for a class-specific Top-$K_c$ consensus set, the probability of both experts including $y$ is significantly higher than for any noisy label $c \ne y$:
\begin{equation}
\Pr(y \in \mathcal{T}^{\text{TE}}_{K_c} \cap \mathcal{T}^{\text{IE}}_{K_c}) \gg \Pr(c \in \mathcal{T}^{\text{TE}}_{K_c} > \mathcal{T}^{\text{IE}}_{K_c}).
\end{equation}
\end{theorem}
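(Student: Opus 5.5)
The plan is to split the joint inclusion event into one factor per expert using conditional independence, and then to compare, for each expert separately, how likely the true class $y$ is to enter the Top-$K_c$ set versus how likely a noisy class $c\neq y$ is. Conditional independence of $\mathbf{p}^{\text{TE}}$ and $\mathbf{p}^{\text{IE}}$ given the sample (and its true label $y$) gives
\begin{equation*}
\Pr\bigl(y \in \mathcal{T}^{\text{TE}}_{K_c} \cap \mathcal{T}^{\text{IE}}_{K_c}\bigr) = q^{\text{TE}}_y\, q^{\text{IE}}_y, \qquad \Pr\bigl(c \in \mathcal{T}^{\text{TE}}_{K_c} \cap \mathcal{T}^{\text{IE}}_{K_c}\bigr) = q^{\text{TE}}_c\, q^{\text{IE}}_c,
\end{equation*}
where $q^{m}_j := \Pr(j \in \mathcal{T}^{m}_{K_c})$. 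I read the second probability in the statement as the intersection event. The ratio of the two sides is then
\begin{equation*}
\frac{q^{\text{TE}}_y}{q^{\text{TE}}_c}\cdot\frac{q^{\text{IE}}_y}{q^{\text{IE}}_c}.
\end{equation*}
So it suffices to show $q^{m}_y > q^{m}_c$ for each expert $m$. The product of two ratios each exceeding one is what gives the amplification; this is the formal content of the ``$\gg$''.

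The step I expect to be the main obstacle is the per-expert comparison $q^m_y > q^m_c$, because the stated hypothesis only controls expectations, $\mathbb{E}[p^m_y] > \mathbb{E}[p^m_c]$. Expectations alone do not control rank probabilities, so I will add an explicit regularity assumption and state it. The first option I would try is a location-family (stochastic dominance) model: write $p^m_j = \mu^m_j + \epsilon^m_j$, or equivalently work with logits, where the noise terms $\epsilon^m_j$ are i.i.d., or at least exchangeable, across classes. Under this model, $\mu^m_y > \mu^m_c$ implies $p^m_y$ stochastically dominates $p^m_c$. A coupling argument then finishes the step. Swap the noise realizations of $y$ and $c$; this is measure-preserving by exchangeability. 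The swap maps the event $\{c \in \mathcal{T}^m_{K}\}$ into the event $\{y \in \mathcal{T}^m_{K}\}$, because raising a coordinate's mean can only improve its rank. The inclusion is strict whenever the means differ and the noise has a density. This yields $q^m_y \ge q^m_c$, with strict inequality in the non-degenerate case.

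To make ``significantly higher'' quantitative, I would bound $1-q^m_y$ and $q^m_c$ with a concentration or Hoeffding-type bound on the margin $\Delta^m = \mathbb{E}[p^m_y]-\max_{c\neq y}\mathbb{E}[p^m_c]$:
\begin{itemize}
\item If the noise is sub-Gaussian with parameter $\sigma$, then $\Pr(p^m_c \ge p^m_y) \le \exp\bigl(-(\Delta^m)^2/4\sigma^2\bigr)$.
\item A union bound over the $C-1$ competing classes gives $q^m_y \ge 1-(C-1)e^{-(\Delta^m)^2/4\sigma^2}$.
\item A Markov-type counting argument gives $q^m_c \lesssim K_c/(C-1)$. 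At most $K_c$ of the $C-1$ non-true classes can occupy the Top-$K_c$ set, and by exchangeability they share that occupancy equally.
\end{itemize}
Combining these, the ratio is at least of order $\bigl((C-1)/K_c\bigr)^2$ times a factor close to one. This is large whenever $K_c \ll C$.

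Finally, I would connect this to the class-adaptive choice $K_c \propto n_c^e$. For tail classes $K_c$ is small, so the amplification factor $((C-1)/K_c)^2$ is largest there: stricter consensus gives the sharpest separation between true and noisy labels. For head classes the larger $K_c$ raises $q^m_y$ toward $1$, which limits false rejections of correct labels, at the cost of a weaker ratio. I would state the final result as the product bound under the added exchangeable-noise assumption, and remark that without some such assumption the claim cannot follow from the expectation hypothesis alone.
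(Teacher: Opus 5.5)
Your skeleton is the paper's: its appendix proof also factors the joint inclusion probability by conditional independence and then compares the per-expert inclusion probabilities of $y$ and $c$. The paper simply passes from $\mathbb{E}[p^m_y]>\mathbb{E}[p^m_c]$ to $\mathbb{E}[\mathrm{rank}(y)]<\mathbb{E}[\mathrm{rank}(c)]$. It then asserts the $\ll$ ``as $c$ is not favored in expectation''. You are right that expectations do not control rank probabilities. Take $C=3$ and $\mathbf{p}^m=(0.98,0.01,0.01)$ w.p.\ $0.2$, $(0.30,0.36,0.34)$ w.p.\ $0.8$, with coordinates ordered $y,c,d$. Then $\mathbb{E}[p^m_y]>\mathbb{E}[p^m_c]$, yet $q^m_y=0.2<0.8=q^m_c$ for $K_c=1$, so the statement is false without an extra hypothesis. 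Your exchangeable-noise location model with the swap coupling is a valid repair. Swapping $\epsilon^m_y$ and $\epsilon^m_c$ shows that $\mathrm{rank}(y)$ under the original noise is at most $\mathrm{rank}(c)$ under the swapped noise. This gives $q^m_y\ge q^m_c$, and hence the strict inequality for the product. One small correction: condition on the true label $y$ with $x$ random in that class, not on the sample itself, since otherwise every $q^m_j\in\{0,1\}$.

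The genuine gap is the quantitative ``$\gg$'' step, specifically the bound $q^m_c\lesssim K_c/(C-1)$ for a \emph{given} noisy label $c$. Exchangeability of the noise $\epsilon^m$ does not make the non-true classes exchangeable, because their means $\mu^m_c$ differ. The counting argument only gives $\sum_{c\neq y}q^m_c\le K_c$, which bounds the average over $c$. For a confusable class the bound fails outright. Take $\mu^m_y=10$, $\mu^m_c=5$, all other means $0$, small $\sigma$, and $K_c=2$. The margin is large and your union bound correctly gives $q^m_y\approx 1$, but also $q^m_c\approx 1$. So the ratio is about $1$ rather than $((C-1)/K_c)^2$. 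This is exactly the asymmetric/pair-flip regime, where the noisy label is the runner-up class for both experts. Under your own model, then, ``$\gg$ for every $c\neq y$'' cannot be proved. You need one of the following:
\begin{itemize}
\item a symmetric-confusion assumption: all $\mu^m_c$, $c\neq y$, equal, which makes the non-true scores exchangeable and yields $q^m_c\le K_c/(C-1)$ exactly;
\item a statement averaged over a uniformly random noisy label. Even then $\frac{1}{C-1}\sum_{c\neq y}q^{\text{TE}}_c q^{\text{IE}}_c$ is only bounded by $K_c/(C-1)$, not its square, unless the two experts' confusions are spread independently across classes.
\end{itemize}
Without such an assumption, the honest conclusion is the strict inequality from your coupling step. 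That is already more than the paper's proof actually establishes.
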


\textit{Proof Sketch.}
Since both experts assign higher expected confidence to the true label $y$ than any noisy label $c \ne y$, the chance of $y$ being jointly included in the Top-$K_c$ sets of both experts is much higher than for any incorrect class. 
This amplifies the reliability of true labels through consensus.  

\vspace{-0.5em}
A formal proof is provided in Appendix~\ref{app_sec:proof1}.
\cref{thm:EC} yields the consensus-driven denoising effect:
\begin{corollary}[Consensus-Based Label Refinement]
By aggregating the consensus votes over training, the method constructs a frequency matrix aligned with clean labels. 
As training progresses ($e$ increases), correct consensus increasingly dominates $\mathbf{F}_y^{(e)}$ in \cref{eq:freq_update}, enabling a bootstrapped denoising that iteratively improves label quality.
\end{corollary}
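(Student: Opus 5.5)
We prove the corollary by treating the accumulation in \cref{eq:freq_update} as a sum of per-epoch random increments. We then show that the expected per-epoch gap between the true class $y$ and any competitor $c\neq y$ is strictly positive; this positivity comes from \cref{thm:EC}. Fix a sample $x$ with clean label $y$ and write
\[
\Delta_c^{(e)} = F_y^{(e)} - F_c^{(e)} = \Delta_c^{(0)} + \sum_{t=1}^{e} D_c^{(t)},
\qquad
D_c^{(t)} = \sum_{m} \alpha_m(x)\bigl(g_m(y)-g_m(c)\bigr),
\]
with every quantity evaluated at epoch $t$. The initial gap $\Delta_c^{(0)}$ is finite; it is zero, or at most one BE vote in favour of a wrong $\tilde y$. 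Every increment is bounded: $|D_c^{(t)}|\le 3$, because each $\alpha_m\le 1$ and each $g_m\le 1$. The corollary then reduces to two claims: $\mathbb{E}[D_c^{(t)}\mid\mathcal{F}_{t-1}]\ge \delta>0$ for some $\delta$ and every epoch, and a concentration statement.

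Step 1 lower-bounds the expected increment. Split each $g_m(y)-g_m(c)$ according to whether $y$ and $c$ lie in $\mathcal{T}^m_{K}$. For TE and IE we have
\[
\mathbb{E}\bigl[\mathbb{I}[y\in\mathcal{T}^m_K]\,p^m_y\bigr]-\mathbb{E}\bigl[\mathbb{I}[c\in\mathcal{T}^m_K]\,p^m_c\bigr] > 0 .
\]
This follows from the hypothesis $\mathbb{E}[p^m_y]>\mathbb{E}[p^m_c]$ together with the inclusion-probability ordering $\Pr(y\in\mathcal{T}^m_{K_c})>\Pr(c\in\mathcal{T}^m_{K_c})$ used in \cref{thm:EC}. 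By the conditional independence in \cref{thm:EC}, the joint event $y\in\mathcal{T}^{\text{TE}}\cap\mathcal{T}^{\text{IE}}$ is far more likely than the corresponding event for $c$. The two experts' positive contributions therefore add with no cancelling correlation.

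The BE term and the weights $\alpha_m$ can favour $\tilde y=c$ when the observed label is wrong. We handle this with a worst case. BE contributes at most $1$ to $c$ per epoch. When $c=\tilde y\notin\mathcal{T}^m_K$, the weight $\alpha_m=1$ gives $c$ no extra mass from expert $m$. We therefore need the combined TE+IE margin to exceed the BE bias. We assume this explicitly in the regime where $K_c$ is small (tail classes), since that is exactly where the class-adaptive choice makes $\Pr(c\in\mathcal{T}^{\text{TE}}_{K_c}\cap\mathcal{T}^{\text{IE}}_{K_c})$ negligible. For head classes we invoke the fact that the observed labels are mostly correct, which gives $\tilde y=y$ and a BE contribution in favour of $y$.

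Step 2 is concentration and the bootstrap. The IE scores change over epochs, so the $D_c^{(t)}$ are not i.i.d. Instead, $D_c^{(t)}-\mathbb{E}[D_c^{(t)}\mid\mathcal{F}_{t-1}]$ is a bounded martingale difference sequence. Azuma--Hoeffding then gives
\[
\Pr\bigl(\Delta_c^{(e)}\le 0\bigr)\le \exp\!\bigl(-(\delta e+\Delta_c^{(0)})^2/(18e)\bigr).
\]
A union bound over the $C-1$ competitors shows that $\arg\max_c F_c^{(e)}=y$ in \cref{eq:argmax} with probability tending to one as $e$ grows, i.e. correct consensus dominates $\mathbf F_y^{(e)}$. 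Summing over samples, the rectified counts $n^{r,(e)}_c$ then approach the clean counts. This improves the $K_c$ used at the next epoch, and it improves IE through the LA loss, which keeps $\delta$ from degrading; this is the bootstrapped denoising.

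The main obstacle is Step 1's uniform margin $\delta$. This is the only point where the feedback loop enters: $K_c$ and IE both depend on past corrected labels, so positivity must hold along the trajectory and not just at $e=0$. I would first establish it by induction on $e$. The hypothesis is that the ordering assumptions of \cref{thm:EC} hold at epoch $e$, and the inductive step shows they are preserved once the corrected labels are at least as clean as the observed ones. If that step resists, I would instead adopt it as a standing assumption for all epochs, as the statement of \cref{thm:EC} implicitly does.
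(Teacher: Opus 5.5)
The gap is the uniform margin $\delta$ in Step 1. You cannot defer it: as you have set things up, it does not follow from \cref{thm:EC}, and it fails for many samples.

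\textbf{Per-sample versus population.} For a fixed $x$, the text expert is a frozen zero-shot model, so $\mathbf{p}^{\text{TE}}$ is essentially the same vector every epoch. BE adds exactly $1$ to $\tilde y$ every epoch, since $g_{\text{BE}}(\tilde y)=1$ and $\alpha_{\text{BE}}=1$. Your conditional drift $\mathbb{E}[D_c^{(t)}\mid\mathcal{F}_{t-1}]$ is therefore a per-sample quantity, while \cref{thm:EC} only orders averages over samples of class $y$. Azuma--Hoeffding averages over epochs, but the dominant parts of the increment do not fluctuate across epochs.

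\textbf{Magnitude, not ordering.} For a mislabeled sample ($\tilde y=c\neq y$), positive drift needs $\sum_{m\in\{\text{TE},\text{IE}\}}\alpha_m(x)\bigl(g_m(y)-g_m(c)\bigr)>1$. Each auxiliary term is below $1$, while BE's vote is exactly $1$. Ordering hypotheses such as $\mathbb{E}[p^m_y]>\mathbb{E}[p^m_c]$ never supply this. Your tail-class justification misses the point: a small $K_c$ can remove TE/IE mass from $c$, but not BE's unconditional vote, and it also makes it harder for $y$ to enter the Top-$K$. Your head-class justification covers only samples with $\tilde y=y$, which need no correction. The paper's ablation shows this is the binding constraint: BE plus one auxiliary expert corrects nothing (\cref{tab:ablation1}), whereas halving BE's weight lets a single expert correct labels (\cref{app:cum_fre}).

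\textbf{The per-expert inequality.} Your per-expert inequality in Step 1 is also a non sequitur. Take $K=1$ and two cases:
\begin{itemize}
\item with probability $0.45$, $(p_c,p_y)=(0.5,0.45)$, so $c$ ranks first;
\item otherwise $(p_c,p_y)=(0,0.2)$, with the remaining mass spread thinly so that $y$ ranks first.
\end{itemize}
Then $\mathbb{E}[p_y]>\mathbb{E}[p_c]$ and $\Pr(y\in\mathcal{T})>\Pr(c\in\mathcal{T})$, yet $\mathbb{E}[\mathbb{I}[y\in\mathcal{T}]\,p_y]=0.11<0.225=\mathbb{E}[\mathbb{I}[c\in\mathcal{T}]\,p_c]$.

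\textbf{The conclusion and the bootstrap.} Your conclusion says every sample is relabeled to $y$ with probability tending to one, so that $n^{r,(e)}_c$ approaches the clean counts. That is stronger than the corollary, and it contradicts the paper's own runs. There the noise rate drops and then plateaus near $28\%$ overall, with the tail split still above $50\%$ (\cref{fig:noisy_rate}, \cref{tab:ablation2}). The bootstrap induction also fails where it matters. IE is trained on the current labels, so on exactly the samples not yet corrected it fits $\tilde y$ and pushes their drift toward the noisy label. Cleaner labels on average do not give a per-sample margin. Taking $\delta>0$ as a standing assumption makes your Azuma and union-bound steps valid, but then the assumption carries all the content, and the setting does not satisfy it.

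\textbf{What the paper does, and a repair.} The paper gives no separate proof; it reads the corollary off \cref{thm:EC} as an aggregate statement. Over samples of class $y$, joint Top-$K$ inclusion favours $y$, so the vote mass accumulated in $\mathbf{F}$ aligns with clean labels overall. Your route can be salvaged at that level:
\begin{enumerate}
\item State the magnitude condition above explicitly, with margin $1+\delta$.
\item Note that a sample satisfying it at every epoch is relabeled deterministically after about $1/\delta$ epochs, with no concentration needed.
\item Conclude that label quality improves in aggregate insofar as such samples outnumber the clean samples flipped by confidently wrong experts, not that noise vanishes.
\end{enumerate}
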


\vspace{-0.5em}
The above corollary leads to the following implications.
\begin{remark}[]
(Consensus-Based Label Refinement). \vspace{-0.5em}
\begin{itemize}[noitemsep, nolistsep, leftmargin=*]
  \item Independent expert consensus suppresses high-confidence noisy supervisions.
  \item Aggregated over time, consensus acts as a robust voting that progressively corrects noisy labels.
\end{itemize}
\end{remark}

\vspace{-0.5em}
\textbf{Class-Adaptive Top-$K$ Balances Head-Tail Confidence.}
CARE adaptively adjusts the number of Top-$K$ candidates per class based on estimated class frequency. 
Tail classes, with limited data and unstable predictions, require stricter consensus to suppress noise-induced interference. 
In contrast, head classes benefit from relaxed consensus due to concentrated confidence and sufficient supervision.
This design introduces the following theoretical advantage.

\begin{proposition}[Tail-Class Consensus Robustness]
\label{prop:tail_consensus}
For any tail class $c$ with low $n_t^e$, selecting a smaller $K_t$ improves the expected consensus precision compared to a global $K$:
\begin{equation}
\mathbb{E}[\mathrm{Precision@}K_t] > \mathbb{E}[\mathrm{Precision@}K], \quad \text{for } K > K_t.
\end{equation}
\end{proposition}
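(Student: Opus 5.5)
We first make precise what $\mathrm{Precision@}K$ means for a tail class $t$. Fix a sample $x$ whose observed label is $\tilde y = t$, with true label $y$. The consensus set at level $K$ is $\mathcal{S}_K = \mathcal{T}^{\text{TE}}_K \cap \mathcal{T}^{\text{IE}}_K$. We define
\begin{equation*}
\mathrm{Precision@}K \;=\; \frac{\mathbb{I}[y \in \mathcal{S}_K]}{\max(|\mathcal{S}_K|,1)},
\end{equation*}
which is the fraction of consensus-accepted candidates that are correct. We take expectations over the conditionally independent expert outputs, as in \cref{thm:EC}.

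For each expert $m$ we introduce the inclusion probabilities $q^m_y(K) = \Pr(y \in \mathcal{T}^m_K)$ and $q^m_c(K) = \Pr(c \in \mathcal{T}^m_K)$. By conditional independence,
\begin{equation*}
\Pr(y \in \mathcal{S}_K) = q^{\text{TE}}_y(K)\, q^{\text{IE}}_y(K), \qquad \Pr(c \in \mathcal{S}_K) = q^{\text{TE}}_c(K)\, q^{\text{IE}}_c(K).
\end{equation*}
Since $\sum_c q^m_c(K) = K$ exactly, the expected size of the consensus set is $\mathbb{E}|\mathcal{S}_K| = \sum_c q^{\text{TE}}_c q^{\text{IE}}_c$. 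To get a tractable comparison, we replace the expectation of the ratio by the ratio of expectations:
\begin{equation*}
P(K) = \frac{q^{\text{TE}}_y(K)\, q^{\text{IE}}_y(K)}{\sum_c q^{\text{TE}}_c(K)\, q^{\text{IE}}_c(K)}.
\end{equation*}
This is the standard precision surrogate, and it is exact to first order when $|\mathcal{S}_K|$ concentrates. The statement then reduces to showing that $P(K)$ is strictly decreasing in $K$.

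The key step is a monotonicity argument.
\begin{itemize}[noitemsep, nolistsep, leftmargin=*]
  \item Increasing $K$ by one raises the numerator by at most a factor $\bigl(q_y(K+1)/q_y(K)\bigr)^2$.
  \item The denominator gains roughly $\Delta_c = q_c(K+1)-q_c(K)$ summed across the $C-1$ wrong classes.
  \item The ordering assumption of \cref{thm:EC}, $\mathbb{E}[p_y] > \mathbb{E}[p_c]$, is strengthened to a ranking assumption: the true class stochastically dominates in rank. Then $q_y(K)$ is concave and close to saturating at small $K$, while the mass $K-q_y(K)$ is spread over the wrong classes and grows essentially linearly in $K$.
  \item The tail assumption enters here. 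For a tail class $t$ with small $n^e_t$, the rank distribution of $y$ is concentrated near the top, and noisy confidences are roughly uniform over the remaining classes. Under this model $q^m_y(K) \approx \pi + (1-\pi)K/C$, where $\pi$ is the top-rank probability, and $q^m_c(K) \approx (K - q^m_y(K))/(C-1)$ for $c \ne y$.
  \item Substituting gives an explicit $P(K)$ whose denominator grows like $K^2/(C-1)$ while its numerator is bounded by $1$. Differentiating or taking differences then shows $P(K_t) > P(K)$ for all $K > K_t$.
\end{itemize}

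The main obstacle is that the statement as written is not true without a modelling assumption. If the true label's rank is spread out, a larger $K$ can raise recall enough to raise precision. So the honest plan is to state the saturation/uniform-noise assumption explicitly, namely that $q_y(K)$ grows more slowly than the wrong-class inclusion mass, and derive the inequality under it. We would also remark that tail classes are exactly where this assumption is plausible, since BE is unreliable there and noisy mass is diffuse. A secondary technical point is justifying the ratio-of-expectations surrogate. We would handle it either by defining precision that way from the start, or by a Jensen-type bound using the concentration of $|\mathcal{S}_K|$.
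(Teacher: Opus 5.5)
Your route differs from the paper's. The paper uses a class-level precision $\mathcal{T}_t(K)/\mathcal{P}_t(K)$. The numerator counts Top-$K$ inclusions of $t$ on samples truly in $\mathcal{D}_t$, summed over the experts individually with no intersection. The denominator counts all inclusions of $t$ over the whole training set. The paper bounds $\mathcal{T}_t(K)\le|E|\,n_t^e$ and simply sets $d\mathcal{T}_t/dK=0$. It then argues that the false-positive term, a sum over the $|\mathcal{D}\setminus\mathcal{D}_t|\gg n_t^e$ non-$t$ samples, keeps growing with $K$. You work sample-wise instead, with the actual consensus set $\mathcal{S}_K$ of a $t$-labelled sample. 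Conditional independence makes joint inclusion of a wrong class quadratically small, and you reduce everything to an explicit rank model.

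\emph{What the paper's route buys.} It has a tail-specific mechanism: false positives are harvested from a much larger non-$t$ population, and this base-rate effect is why precision degrades precisely when $n_t^e$ is small. In your derivation $n_t^e$ appears only as motivation. The same computation shows precision decreasing in $K$ for every class, so it gives no reason for a larger $K$ on head classes.

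\emph{What your route buys.} It is more transparent in three ways. It uses the consensus structure and the independence assumption of \cref{thm:EC}. It ties $K_t$ to the samples labelled $t$, whereas the paper's false-positive count applies one common $K$ to all non-$t$ samples. And it shows that a modelling assumption is unavoidable; the paper hides this in $d\mathcal{T}_t/dK=0$.

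Under your exchangeable-wrong-class model the clean criterion is that $W/a$ increases, where $a=q^{\text{TE}}_yq^{\text{IE}}_y$ and $W=(K-q^{\text{TE}}_y)(K-q^{\text{IE}}_y)/(C-1)$. It suffices that $q^m_y(K)/K$ decrease strictly for each expert, i.e.\ that $\Pr(\mathrm{rank}_m(y)=K+1)$ lie below the average of the first $K$ rank probabilities. A nonincreasing, non-uniform rank pmf guarantees this. ``Numerator bounded by $1$'' is not the operative test, since the numerator also grows.

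One step must be settled rather than hedged. You cannot keep the per-sample definition $\mathbb{I}[y\in\mathcal{S}_K]/\max(|\mathcal{S}_K|,1)$ and reach $P(K)$ by a concentration or Jensen argument. For the small $K$ at issue, $|\mathcal{S}_K|$ is a small integer that does not concentrate, and the two quantities can move in opposite directions. Take $C=100$, exchangeable wrong classes, and $\Pr(y\in\mathcal{T}^m_1)=0.5$, $\Pr(y\in\mathcal{T}^m_2)=0.8$ for both experts. The expected per-sample precision rises from $0.25$ at $K=1$ to about $0.64$ at $K=2$, because an empty consensus scores $0$. Meanwhile $P(1)\approx0.990>P(2)\approx0.978$. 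Even your own linear model with $\pi=0.5$ and the same $C$ gives $0.255$ versus about $0.259$ for the per-sample version.

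So adopt $P(K)=\Pr(y\in\mathcal{S}_K)/\mathbb{E}|\mathcal{S}_K|$, expected correct candidates over expected candidates, as the definition itself. This ratio of totals has the same form as the paper's $\mathcal{T}_t(K)/\mathcal{P}_t(K)$. With it, your argument closes under the assumptions you state.
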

The proof is provided in Appendix~\ref{app_sec:proof2}.
\cref{prop:tail_consensus} yields the following practical implications.
\begin{remark}[](Implications of Class-Aware Consensus).\vspace{-0.5em}
    \begin{itemize}[noitemsep, nolistsep, leftmargin=*]
      \item The class-aware consensus mechanism offers theoretical robustness for tail-class prediction under noisy and imbalanced conditions.
      \item The adaptive $K_c$ accounts for class imbalance, more flexibility for head classes and conservative consensus for tail classes while preventing spurious head-class candidates from dominating tail-class decisions, ensuring a discriminative and class-sensitive consensus.
      \item As training progresses and labels are refined, ${y^r}$ becomes increasingly aligned with the true class distribution, reinforcing supervision and long-tailed bias calibration, especially for rare categories.
    \end{itemize}
\end{remark}

\section{Experiment}

\subsection{Experimental Setup and Baseline Methods} 
\label{sec:exp_set}

\noindent \textbf{Datasets.}
\textit{Synthetic Datasets:}
We evaluate CARE on both synthetic and real-world long-tailed noisy label datasets following \citet{lu2023tabasco} and \citet{zhang2023rcal}. 
For the synthetic dataset, we construct a long-tailed noisy-label version of CIFAR-100~\citep{krizhevsky2009learning} (CIFAR-100-LTN) by subsampling its training set to follow an exponential class distribution with a predefined imbalance factor ($IF$), and then injecting symmetric, asymmetric, or joint noise at a predefined noise rate ($NR$), following the protocol in~\cite{zhang2023survey}.
\textit{Real-world Noise:}
The WebVision-50~\citep{li2017webvision} with inherent long-tailed distributions and noisy labels is adopted as real-world noise.
Following SSBL$_2$~\citep{zhang2024ssbl}, we subsample Food101N~\citep{lee2018cleannet} into three different long-tailed noisy datasets by the $IF$ of 20, 50, and 100 with inherent noise as real-world noise.
For mini-ImageNet~\citep{jiang2020red}, we create a long-tailed version via exponential subsampling and inject real-world web noise, forming ImageNet-LTN$^r$ (abbreviated as Img-LTN$^r$).

\textbf{Implementation Details.}
We adopt CLIP~\citep{radford2021clip} ViT-B/16 as the backbone and Adaptformer~\citep{Chen2022adaptformer} as the fine-tuning strategy. 
The model is trained for 20 epochs on all datasets with a batch size of 128 using SGD (initial learning rate 0.01, momentum 0.9, weight decay $5 \times 10^{-4}$).
The computer resources for the experiments are provided in Appendix~\ref{app:com_res}.
\begin{table*}[tb]
\small
\centering
\caption{Top-1 acc. (\%) on CIFAR-100-LTN with joint noise.
Res32 and Res18 are abbreviations for ResNet-32 and PreAct ResNet18, respectively. 
}
\label{tab:cifar_jn}
\vspace{-0.5em}
\resizebox{1.\textwidth}{!}{
\setlength\tabcolsep{8pt}  
\renewcommand{\arraystretch}{1.}
\begin{tabular}{l|ccccc|ccccc}
\hlinew{1pt}
Dataset    & \multicolumn{10}{c}{CIFAR-100-LTN}  \\
\hline
Imbalance Factor  & \multicolumn{5}{c|}{10} & \multicolumn{5}{c}{100}  \\ 
\hline
Noise Ratio  & 0.1 & 0.2 & 0.3 & 0.4  & 0.5 & 0.1 & 0.2 & 0.3 & 0.4 & 0.5\\ 
\hline  
CE    & 48.5  & 43.3 & 37.4 & 32.9 & 26.2 & 31.8 & 26.2 & 21.8 & 17.9 & 14.2 \\ 
\cdashline{1-11}
LDAM-DRW~\citep{Kaidi2019ldam} & 54.0 & 50.4  &  45.1 & 39.4 & 32.2  & 37.2 & 32.3 & 27.6 & 21.2 & 15.2 \\
NCM~\citep{decouple20} & 50.8 & 45.2  & 41.3 & 35.4 &  29.3 & 34.9 & 29.5 & 24.7 &  21.8 & 16.8 \\
MiSLAS\citep{liu2021improving} & 57.7 & 53.7  & 50.0 & 46.1 & 40.6 & 41.0 & 37.4 & 32.8 & 27.0 & 21.8 \\ 
\cdashline{1-11}
Co-teaching~\citep{han2018Co-teachingt} & 45.6 & 41.3 & 36.1 & 32.1 & 25.3 & 30.6 & 25.7 & 22.0 & 16.2 & 13.5 \\
CDR~\citep{xia2020robust} & 47.0 & 40.6 & 35.4 & 30.9 & 24.9  & 27.2 & 25.5 & 22.0 & 17.3 & 13.6 \\
Sel-CL+~\citep{li2022selective} & 55.7 & 53.5 & 50.9 & 47.6 & 44.9 & 37.5 & 36.8  & 35.1 & 32.0 & 28.6  \\ 
\cdashline{1-11}
RoLT~\citep{Wei2021robust} & 54.1 & 51.0 & 47.4 & 44.6 & 38.6 & 35.2 & 31.0  & 27.6 & 24.7 & 20.1 \\
RoLT-DRW~\citep{Wei2021robust} & 55.4 & 52.4  & 49.3 & 46.3 & 40.9  & 37.6 & 32.7  & 30.2 & 26.6 & 21.1 \\
HAR-DRW\citep{yi2022identifying} & 51.0 & 46.2  & 41.2 & 37.4 & 31.3 & 33.2 & 26.3  & 22.6 & 19.0 & 14.8  \\
RCAL~\citep{zhang2023rcal} & 57.5 & 54.9  &  51.7 & 48.9 & 44.4  & 41.7 & 39.9  & 36.6 & 33.4 & 30.3 \\
ECBS-Res32~\citep{li2024extracting} & 58.4 & 55.6  &  53.8 &  52.8 & 51.2 & 41.6 & 40.7  &  38.5 &  37.1 &  35.5 \\
ECBS-Res18~\citep{li2024extracting} & 63.6 & 62.3  &  60.1 &  58.2 & 55.3 & 45.6 & 44.8  &  43.0 &  39.9 &   39.1  \\
\cdashline{1-11}
CLIP (zero-shot) & 64.4 & 64.4  &64.4 & 64.4 & 64.4 & 64.4 & 64.4 & 64.4 &64.4 & 64.4 \\
CLIP+CE & 82.1 & 81.3  & 80.1  & 78.8  & 77.2  & 71.0 & 69.2  & 67.8  & 66.5  & 63.8    \\
\rowcolor{mygray}
CLIP+CE w. CARE (ours) & 82.4 & 81.7  & 81.0  & 79.8  & 78.6  & 71.5  & 70.5  & 69.4  & 67.9  & 65.4    \\ 
CLIP+LA~\citep{shi2024LIFT} & \textbf{84.0}  & 83.0  &  82.4  & 80.8 & 79.5  & 80.5 & 79.1  &  77.4 &  76.6 &   75.3  \\
CLIP+RLA*~\citep{chen2025robustla} & 83.9 & 83.0  &  82.5 &  81.4 & \textbf{80.7} & 74.0 & 72.3  &  71.0 & 69.2 &  66.7  \\
\rowcolor{mygray}
CLIP+LA w. CARE (ours) & 83.9  & \textbf{83.5}  & \textbf{83.0} & \textbf{81.9} & \textbf{80.7} & \textbf{80.7} & \textbf{79.8} & \textbf{78.5}  & \textbf{77.5} & \textbf{76.7} \\      
\hlinew{1pt}
\end{tabular}
}
\vspace{-1em}
\end{table*}


\textbf{Compared Methods.} 
We compare CARE with three groups of methods:
\textit{Long-tail (LT) learning}: LDAM~\citep{Kaidi2019ldam}, NCM~\citep{decouple20},  MiSLAS~\citep{liu2021improving}, LA~\citep{adjustment21}, and influence-balanced loss (IB)~\citep{park2021influence}. 
\textit{Noisy label (NL) learning}: Co-teaching (CT)~\citep{han2018Co-teachingt}, MentorNet~\citep{2018Mentornet}, CDR~\citep{xia2020robust}, ELR+~\citep{liu2020elr}, MoPro~\citep{junnan2021mopro}, NGC~\citep{WU2021ngc}, Sel-CL~\citep{li2022selective} DivideMix~\citep{li2020Dividemix}, and UNICON~\citep{Karim2022Unicon}. 
\textit{Long-tailed noisy label (LTNL) learning}: MW-Net~\citep{Shu2019mwnet}, ROLT~\citep{Wei2021robust}, HAR~\citep{yi2022identifying}, ULC~\citep{Huang2022ulc},  H2E~\citep{yi2022identifying}, TABASCO~\citep{lu2023tabasco}, RCAL~\citep{zhang2023rcal}, ECBS~\citep{li2024extracting}, SSBL$_2$~\citep{zhang2024ssbl} and RLA~\citep{chen2025robustla}.

\vspace{-0.5em}
\subsection{Main Comparison Results\protect\footnotemark} 
\footnotetext{Entries marked with * denote our re-implementation with CLIP backbone; best results are highlighted in \textbf{bold}.
}

\vspace{-0.5em}
\noindent \textbf{Comparison Results on Synthetic LTNL Datasets.} 
We evaluate CARE on CIFAR-100-LTN under joint, symmetric, and asymmetric noise (see \cref{tab:cifar_jn,tab:cifar_sn_an}).
Standard LT or NL learning methods outperform CE, as they address class imbalance or label noise, respectively. 
LT methods benefit tail class learning by leveraging estimated class priors from observed labels, while NL methods mitigate noise through instance selection or loss correction. 
LTNL methods, tailored to address both challenges, achieve better overall results. 
Recent methods like RCAL, TABASCO, and RLA yield robust performance, but degrade under noise rates and severe class imbalance.
\begin{table}[ht] 
\small
\centering 
\caption{Top-1 Acc. (\%) on CIFAR-100-LTN with $IF=10$ under symmetric and asymmetric noise.
}\label{tab:cifar_sn_an}
\resizebox{\linewidth}{!}{
\renewcommand{\arraystretch}{1.}
\begin{tabular}{l|cc|cc}
\hlinew{1pt}
Dataset   & \multicolumn{4}{c}{\small CIFAR-100-LTN} \\
\hline
Noise Type  &  \multicolumn{2}{c|}{Symmetric} & \multicolumn{2}{c}{Asymmetric} \\
\hline
Noise Ratio  & 0.4 & 0.6 & 0.2 & 0.4\\
\hline
CE   & 34.5 & 23.6 &  44.5 & 32.1\\ 
\cdashline{1-5}
LDAM \citep{Kaidi2019ldam} & 31.3 & 23.1 & 40.1 & 33.3\\ 
LA \citep{adjustment21}  & 29.1 & 23.2 & 39.3 & 28.5\\ 
IB \citep{park2021influence}  & 32.4 & 25.8 & 45.0 & 35.3 \\ 
\cdashline{1-5}
DivdeMix \citep{li2020Dividemix} & 54.7 & 45.0  & 58.1 & 42.0\\ 
UNICON~\citep{Karim2022Unicon} & 52.3 & 45.9 & 56.0 & 44.7  \\ 
\cdashline{1-5}
MW-Net \citep{Shu2019mwnet} & 32.0 & 21.7 &  42.5 & 30.4 \\ 
RoLT \citep{Wei2021robust} & 42.0 & 32.6 &  48.2 & 39.3\\ 
HAR~\citep{yi2022identifying} & 38.2 & 26.1 & 48.5 & 33.2  \\ 
ULC~\citep{Huang2022ulc} & 54.9 & 44.7 & 54.5  & 43.2 \\
TABASCO~\citep{lu2023tabasco} & 56.5 & 46.0  & 59.4  & 50.5\\
ECBS~\citep{li2024extracting} & 56.7 & 48.1  & 60.5 & 52.1  \\
\cdashline{1-5}
CLIP zero-shot & 64.4 & 64.4 & 64.4 & 64.4\\ 
CLIP+CE & 78.8 & 74.5  & 79.3 & 64.4 \\
\rowcolor{mygray}
CLIP+CE w. CARE (ours)  & 80.3 & 77.2  & 80.3 & 67.5 \\
CLIP+LA~\citep{shi2024LIFT} & 80.5 & 76.0  & 80.1 & 68.2  \\
CLIP+RLA*~\citep{chen2025robustla} & 80.9   &  77.0   &  80.6   &  69.3   \\
\rowcolor{mygray}
CLIP+LA w. CARE (ours)  & \textbf{81.5}  & \textbf{79.2} & \textbf{80.8} & \textbf{70.5}\\ 
\hlinew{1pt}
\end{tabular}
}
\vspace{-2.0em}
\end{table}

For example, RCAL drops to 30.3\% under joint noise with $IF=100$  and $NR=50\%$, while TABASCO reaches only 50.5\% under asymmetric noise ($IF=10$, $NR=40\%$).
In contrast, CARE consistently outperforms both LT/LN methods and prior LTNL methods, especially under challenging settings. 
For example, on CIFAR-100-LTN, our method achieves 76.7\% under joint noise ($IF=100$, $NR=50\%$) and 79.2\% under symmetric noise ($IF=10$, $NR=60\%$), surpassing CLIP+LA by 1.4\% and 3.2\%, respectively. 
Compared to CLIP+RLA, CARE further improves accuracy by 10.0\% and 2.2\% under the same settings. 
It is also worth noting that while RLA is designed to enhance LA in noisy and imbalanced scenarios, it may occasionally underperform LA, particularly under severe noise conditions, possibly due to class-agnostic label refurbishment.

\vspace{-0.5em}
\noindent \textbf{Comparison Results on Real-world Datasets.}
\cref{tab:img}, \cref{tab:webvision50}, and \cref{tab:food} report the comparative results on three real-world datasets.
As shown in \cref{tab:img}, our method achieves the highest top-1 accuracy on Img-LTN$^r$ under both moderate ($IF=10$) and severe ($IF=100$) imbalance. 
Notably, CARE consistently improves performance when integrated with CE or LA, surpassing existing methods such as DivideMix, ECBS, and CLIP+RLA.
Besides, Appendix~\ref{app:ce_la} analyses why CE loss outperforms the LA loss in an imbalance factor of 10. 
In \cref{tab:webvision50}, CARE achieves the highest accuracy (85.3\%) on the WebVision test set and maintains competitive performance on ImageNet, demonstrating strong robustness to real-world noise and large-scale data. 
On Food101N (\cref{tab:food}), our method achieves superior accuracy across all $IF$. 
In particular, it outperforms CLIP+RLA by a large margin under severe imbalance (e.g., +6.9\% at $IF=100$) and further improves upon CLIP+LA by up to +0.4\%, indicating its complementary effect in refining label adjustment in noisy long-tailed scenarios. 
These results verify the effectiveness of CARE in enhancing noise tolerance and representation learning, especially under realistic and challenging long-tailed distributions.
CARE's effectiveness on real-world noisy data (e.g., WebVision-50, Food101N) is primarily driven by two fundamental mechanisms. 
First, Noise-Distribution-Agnostic Rectification: Rather than estimating complex transition matrices, CARE leverages a multimodal consensus across text semantics, image features, and observed labels to naturally filter unknown noise without prior assumptions. 
Second, Class-Aware Thresholding: Since real-world noise disproportionately harms data-scarce tail classes, CARE applies a stricter, adaptive Top-$K$ consensus for them. 
This prevents noisy head samples from dominating tail predictions and avoids over-filtering intrinsic tail uncertainty.

\begin{table*}[tb]
\centering
\renewcommand{\arraystretch}{1.} 
\begin{minipage}[t]{0.29\linewidth}
\caption{Top-1 Acc. (\%) on Img-LTN$^r$ with NR of 0.4.
}\label{tab:img}
\vspace{-0.5em}
\resizebox{1.\linewidth}{!}{
\setlength\tabcolsep{8pt}
\renewcommand{\arraystretch}{1.1}  
\begin{tabular}{l|cc}
\hlinew{1pt}
Dataset   & \multicolumn{2}{c}{\small Img-LTN$^r$}  \\
\hline
Imbalance Factor  &  10 & 100 \\
\hline
CE   & 31.5 & 31.5\\
\cdashline{1-3}
DivdeMix & 49.0 & 34.7 \\ 
UNICON  & 41.6 & 31.1 \\ 
\cdashline{1-3}
MW-Net  & 40.3 & 31.1 \\ 
RoLT  & 24.2 & 16.9  \\ 
HAR  & 38.7 & 31.3 \\ 
ULC   & 47.1 & 34.8 \\ 
TABASCO   & 49.7 & 37.1 \\ 
ECBS   & 50.8 & 36.9 \\ 
\cdashline{1-3}
CLIP zero-shot  & 54.0 & 54.0 \\
CLIP+CE   & 83.4    & 80.5  \\
\rowcolor{mygray}
CLIP+CE w. CARE & \textbf{84.6} & 81.5 \\
\cdashline{1-3}
CLIP+LA     & 83.7   & 80.8 \\ 
CLIP+RLA*   & 84.0      & 79.9 \\
\rowcolor{mygray}
CLIP+LA w. CARE   & 84.4  & \textbf{81.9} \\
\hlinew{1pt}
\end{tabular}
}
\end{minipage}
\hspace{0.05cm}
\begin{minipage}[t]{0.325\linewidth}
\caption{Top-1 Acc. (\%) on WebVision-50 (WV50). 
}\label{tab:webvision50}
\vspace{-0.5em}
\resizebox{1.\linewidth}{!}{
\setlength\tabcolsep{8pt}
\renewcommand{\arraystretch}{1.1}  
\begin{tabular}{l|cc}
\hlinew{1pt}
 Train & \multicolumn{2}{c}{\small Webvision-50}  \\
\hline
Test &WV50 &IMG12 \\
\hline
CE  & 62.5   & 58.5 \\
\cdashline{1-3}
Co-teaching  & 63.6    & 61.5 \\ 
MentorNet  & 63.0   & 57.8 \\ 
ELR+  & 77.8 & 70.3 \\ 
MoPro   & 77.6    & 76.3  \\ 
NGC & 79.2   & 74.4\\ 
Sel-CL+   & 80.0  & 76.8 \\ 
\cdashline{1-3}
RCAL+   & 79.6  & 76.3 \\ 
ECBS  & 80.0 & 76.1 \\ 
\cdashline{1-3}
CLIP zero-shot & 62.0   & 62.0 \\
CLIP+CE   & 84.2    & 83.5  \\
\rowcolor{mygray}
CLIP+CE w. CARE   & 84.7    &  83.6 \\
CLIP+LA & 85.1    &  \textbf{84.3}\\ 
CLIP+RLA*   & 85.0      &  84.2 \\ 
\rowcolor{mygray}
CLIP+LA w. CARE   & \textbf{85.3}    & 83.8 \\
\hlinew{1pt}
\end{tabular}
}
\end{minipage}
\hspace{0.05cm}
\begin{minipage}[t]{0.335\linewidth}
\caption{Top-1 Acc. (\%) on Food101N with raw noise.
}\label{tab:food}
\vspace{-0.5em}
\resizebox{1.03\linewidth}{!}{
\setlength\tabcolsep{8pt}
\renewcommand{\arraystretch}{1.1}  
\begin{tabular}{l|ccc}
\hlinew{1pt}
Dataset & \multicolumn{3}{c}{\small Food101N}  \\ 
\hline
Imbalance Factor & 20 & 50 & 100\\
\hline
LA & 64.2 & 48.7 & 46.4\\ 
\cdashline{1-4}
ELR+  & 58.6 & 49.0 & 44.1 \\ 
DivideMix  & 64.0 & 54.7 & 52.6  \\ 
DivideMix-LA & 71.5 & 63.7 & 59.0 \\ 
DivideMix-DRW & 64.2 & 54.0 & 53.7\\ 
\cdashline{1-4}
MW-Net  & 56.7 & 47.4 &41.9\\ 
HAR-DRW   & 54.1 & 48.3 & 41.8\\
H2E   & 70.4 & 63.7 &58.7\\ 
SSBL$_2$  & 74.4 & 69.1 & 63.7\\ 
\cdashline{1-4}
CLIP zero-shot  & 69.0 & 69.0 & 69.0\\
CLIP+CE & 81.3  & 76.9  & 72.6\\
\rowcolor{mygray}
CLIP+CE w. CARE & 81.6   & 79.0  & 75.2\\
CLIP+LA  & 84.8  & 84.4  & 83.7 \\ 
CLIP+RLA*  & 84.1  & 82.2  & 77.2\\
\rowcolor{mygray}
CLIP+LA w. CARE  & \textbf{85.3}  & \textbf{84.6}  & \textbf{84.1}\\
\hlinew{1pt}
\end{tabular}
}
\end{minipage}
\vspace{-0.5em}
\end{table*}

\vspace{-0.5em} 
\subsection{Further Analysis} 
\label{sec:further_ana}
\vspace{-0.5em}
\noindent\textbf{Evolution of Class-Level Noise During Training.}
As illustrated in \cref{fig:noisy_rate}, the class-level noise rates on the CIFAR-100-LTN dataset (50\% joint noise, $IF=10$) decrease markedly as training progresses.
After the second epoch, both overall and noise rates across classes of varying sample sizes drop sharply and then stabilize, offering increasingly reliable supervision.
This trend provides increasingly accurate supervision for model training. 
Remarkably, despite having fewer clean samples, tail classes benefit greatly from CARE and achieve noise rates comparable to head classes, showing its effectiveness in correcting noisy labels even under severe imbalance.

\begin{figure}[t]
\centering
\begin{minipage}[t]{0.95\linewidth}
\includegraphics[width=0.95\linewidth]{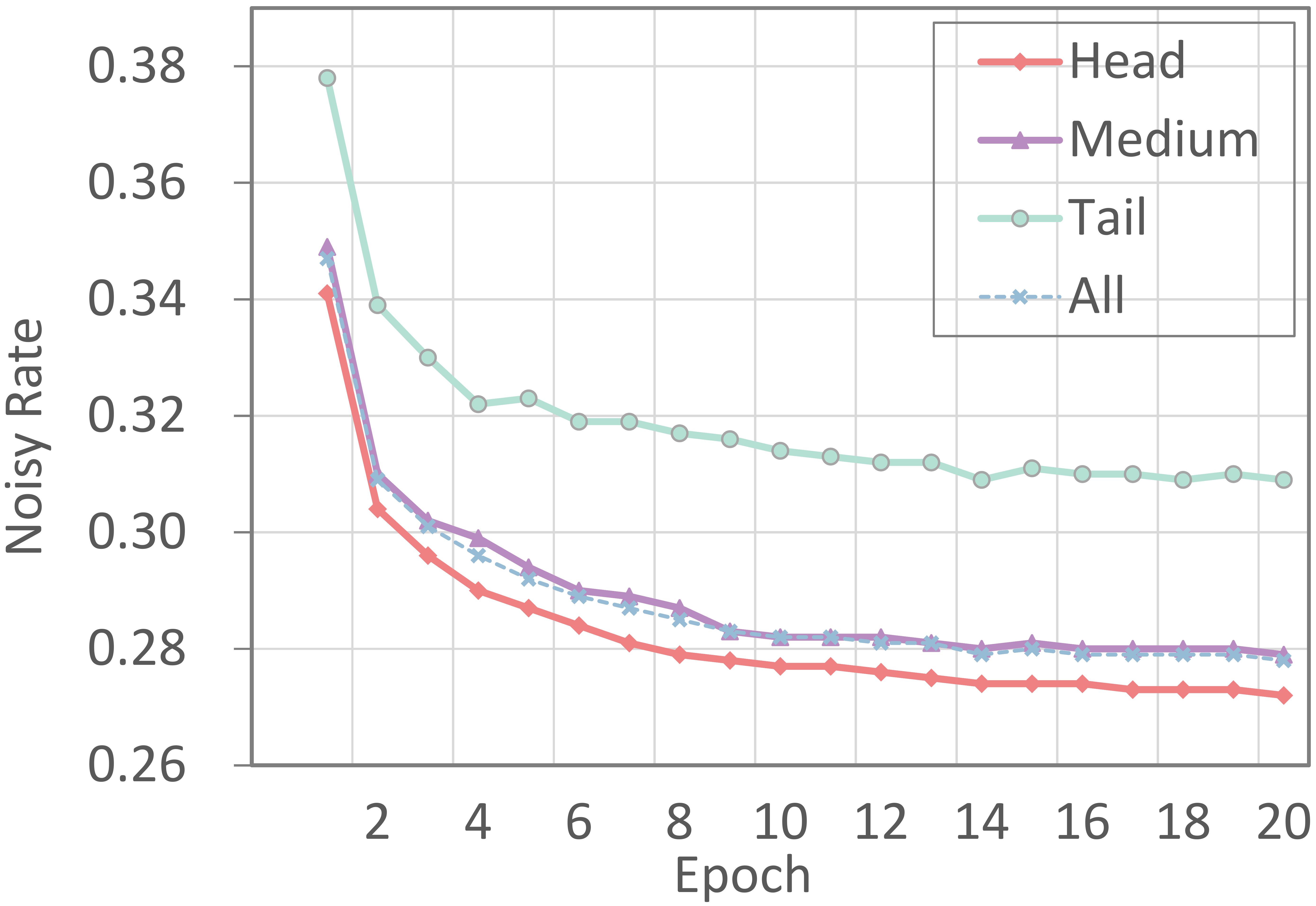} 
\caption{Noise rate dynamics during training of class splits.} %
\label{fig:noisy_rate}
\end{minipage}
\hspace{2em}
\begin{minipage}[t]{0.48\textwidth}
\begin{minipage}[t]{1.\linewidth}
\centering
\small
\captionof{table}{Ablation study on CIFAR-100-LTN}\label{tab:ablation1}
\resizebox{1.\linewidth}{!}{
\setlength\tabcolsep{12pt}
\renewcommand{\arraystretch}{1.}  
\begin{tabular}{ccc|c|c}
\hlinew{1pt}
BE & TE & IE & NR (\%) & ACC (\%) \\ 
\hline
$\checkmark$ & & & 50.0 & 78.7 \\
$\checkmark$ & $\checkmark$ & & 50.0 & 78.7 \\
$\checkmark$ & & $\checkmark$ & 50.0 & 78.7 \\
$\checkmark$ & $\checkmark$ & $\checkmark$ & 27.8 & 80.3 \\ 
\hlinew{1pt}
\end{tabular}
}
\end{minipage}

\begin{minipage}[t]{\linewidth}
\centering
\small
\vspace{1em} 
\captionof{table}{Comparison between CARE and BE}\label{tab:ablation2}
\vspace{-0.5em}
\resizebox{1.\linewidth}{!}{
\setlength\tabcolsep{9pt}
\renewcommand{\arraystretch}{1.1}  
    \begin{tabular}{c|c|cccc}
    \hlinew{1pt}
     &  & Head & Med. & Tail & All \\ 
    \hline
    BE & NR $\downarrow$ (\%) & 36.0 & 56.9 & 72.8 & 50.0 \\ 
    CARE & NR $\downarrow$ (\%) & 18.1 & 33.3 & 53.3 & 27.8 \\ 
    \hline
    BE & ACC $\uparrow$ (\%) & 83.4 & 79.1 & 72.3 & 78.7 \\ 
    CARE & ACC $\uparrow$ (\%) & 81.8 & 81.1 & 77.3 & 80.3 \\ 
    \hlinew{1pt}
    \end{tabular}
}
\end{minipage}
\end{minipage}
\vspace{-1.5em}
\end{figure}

\vspace{-0.5em}
\noindent\textbf{Effectiveness of Expert Collaboration.}
The ablation results in \cref{tab:ablation1} verify the individual contributions of each component (BE, TE, IE), where full integration yields a substantial performance gain (+1.6\%) and significantly lowers the noise ratio from 50.0\% to 27.8\%. 
Removing either TE or IE leads to no improvement in accuracy or noise reduction, indicating that collaborative consensus among the experts is crucial.
In our framework, CARE is integrated with a logit-adjustment-based long-tailed calibration loss function. 
Once the labels are corrected, the improved label quality makes the calibration more precise, which can further amplify this effect on head classes.
Further breakdown in \cref{tab:ablation2} shows that CARE not only reduces noise more effectively across all class splits (especially tail: from 72.8\% to 53.3\%), but also achieves higher accuracy for tail classes (+5.0\%) compared to the baseline expert (BE). 
These results validate the design of CARE and its ability to enhance both label quality and generalization in long-tailed noisy settings.
Interestingly, while CARE improves tail performance, head class accuracy slightly decreases. 
This trade-off arises from the stronger regularization effect imposed by LA on head classes, which is further reinforced by the improved label reliability of tail classes under the proposed consensus mechanism.
Appendix~\ref{app_sec:abl}-\ref{app:rld_noise} provides more complementary analysis for CARE.


\vspace{-0.5em}
\noindent\textbf{Generalization of CARE across Different Backbones.}
For supplementary analysis, we evaluate the setting where the image expert is implemented with a ResNet backbone, and the text expert is based on GloVe embeddings~\citep{jeff2014glove}.
The experimental results of ResNet + GloVe~\citep{jeff2014glove} embeddings are presented as~\cref{tab:glove}.
Despite using less powerful models, CARE still surpasses the base model, demonstrating its generalization ability.
CARE corrects noisy labels through a consensus mechanism that leverages the \textit{discrepancy} among the predicted label distributions of different experts. 
Rather than depending on the absolute performance of IE or TE, CARE benefits from the diversity in their predictions, ensuring that even when the experts are relatively low-performing, as long as their outputs differ meaningfully from BE, the consensus mechanism remains effective. This robustness is also demonstrated in \cref{tab:img} to \cref{tab:food}, where the zero-shot performance of CLIP (used as IE in our method) varies significantly, by up to 15\% (54.0\% vs. 69.0\%) across two real-world datasets. 
Despite this large performance gap, CARE consistently achieves strong results on both datasets, maintaining accuracy above 80\%, which highlights its ability to leverage expert disagreement rather than relying on expert strength alone.

In addition, to demonstrate that our framework remains highly effective using purely visual backbones, we conduct experiments with TABASCO~\citep{lu2023tabasco}, a long-tailed noisy label learning method that uses ResNet as backbone, and MLP-Mixer~\citep{mlp-mixer21}, an all-MLP architecture for vision, respectively.
As shown in \cref{tab:tabasco} and \cref{tab:mlp-mixer}, CARE improves the performance of TABASCO and MLP-Mixer under different noise ratios. 
These results indicate that the performance gains are not solely attributable to the backbone but arise from the core design of CARE, which effectively integrates complementary expert signals to perform robust label correction and tail-aware calibration.

\begin{table}[t]
\centering

\footnotesize
\caption{Accuracy (\%) Comparison of CARE (with ResNet + GloVe) on CIFAR100-LTN.}\label{tab:glove}
\resizebox{\linewidth}{!}{
\setlength\tabcolsep{6pt} 
\renewcommand{\arraystretch}{1.2}
\begin{tabular}{l|c|cc}
\hlinew{1pt}
Datasets/Methods  & Noise type & w/o CARE & w/ CARE \\
\hline
CIFAR100\_IF100\_NR50 & Joint & 45.4 & 47.0 \\ 
CIFAR100\_IF10\_NR40 & Symmetric  & 61.0 & 63.4 \\
CIFAR100\_IF10\_NR50 & Joint & 58.9  & 61.1  \\
CIFAR100\_IF10\_NR60 & Symmetric  & 52.3   & 56.2  \\
\hlinew{1pt}
\end{tabular}
}
\vspace{1.3em} 
\small
\caption{Accuracy (\%) Comparison of CARE (with TABASCO) on CIFAR100-LTN.}\label{tab:tabasco}
\resizebox{\linewidth}{!}{
\setlength\tabcolsep{6pt}
\renewcommand{\arraystretch}{1.2}  
\begin{tabular}{l|c|cc}
\hlinew{1pt}
Datasets/Methods  & Noise type & w/o CARE & w/ CARE \\
\hline
CIFAR100\_IF10\_NR20 & Asymmetric & 59.39 & 60.32 \\ 
CIFAR100\_IF10\_NR40 & Symmetric  & 56.52 & 57.20 \\
\hlinew{1pt}
\end{tabular}
}
\vspace{1.3em} 
\small
\caption{Accuracy (\%) Comparison of CARE (with MLP-Mixer) on CIFAR100-LTN.}\label{tab:mlp-mixer}
\resizebox{\linewidth}{!}{
\setlength\tabcolsep{6pt}
\renewcommand{\arraystretch}{1.2}  
\begin{tabular}{l|c|cc}
\hlinew{1pt}
Datasets/Methods  & Noise type & w/o CARE & w/ CARE \\
\hline
CIFAR100\_IF10\_NR40 & Symmetric & 76.0 & 77.8 \\ 
CIFAR100\_IF100\_NR50 & Joint  & 63.8 & 65.1 \\
\hlinew{1pt}
\end{tabular}
}
\end{table}

\cref{tab:clip_resnet} reports the performance of the CLIP ResNet backbone in our experiments.
These improvements are achieved within just 20 training epochs on noisy label data. The effectiveness of CARE does not rely on the absolute accuracy of a single expert. Instead, it is driven by the \textit{discrepancy} among expert predictions. 
By comparing the cumulative frequency of predicted labels across different experts, CARE identifies and corrects potential noisy labels when there is sufficient disagreement with the base expert. 
Therefore, even when the auxiliary experts provide relatively inaccurate predictions, their diversity provides meaningful signals for correction.

\cref{tab:blip} reports the performance of the BLIP~\citep{blip22} in our experiments. As shown in \cref{tab:blip}, applying our CARE framework consistently improves upon the BLIP baseline (w/o CARE) across various noise settings on CIFAR-100-LTN. These new results clearly demonstrate that CARE is a highly flexible, plug-and-play framework. Its class-aware multi-expert consensus mechanism is not restricted to CLIP, but can effectively integrate with and enhance other advanced VLMs to robustly tackle long-tailed noisy labels.

\begin{table}[t]
\centering
\small
\caption{Accuracy (\%) Comparison of CARE (with CLIP ResNet backbone) on CIFAR100-LTN.}\label{tab:clip_resnet}
\resizebox{\linewidth}{!}{
\setlength\tabcolsep{6pt}
\renewcommand{\arraystretch}{1.2}  
\begin{tabular}{l|c|cc}
\hlinew{1pt}
Datasets/Methods & Noise type  & w/o CARE & w/ CARE \\
\hline
CIFAR100\_IF100\_NR50 & Joint & 53.3       & 53.8 \\ 
CIFAR100\_IF10\_NR20 & Asymmetric & 63.7   & 64.1 \\
CIFAR100\_IF10\_NR40 & Asymmetric  & 52.4  & 53.2  \\
CIFAR100\_IF10\_NR60 & Symmetric  & 56.5   & 57.5  \\
\hlinew{1pt}
\end{tabular}
}
\vspace{1.3em} 
\small
\caption{Accuracy (\%) Comparison of CARE (with BLIP) on CIFAR100-LTN.}\label{tab:blip}
\resizebox{\linewidth}{!}{
\setlength\tabcolsep{6pt}
\renewcommand{\arraystretch}{1.2}  
\begin{tabular}{l|c|cc}
\hlinew{1pt}
Datasets/Methods & Noise type  & w/o CARE & w/ CARE \\
\hline
CIFAR100\_IF10\_NR40 & Symmetric & 81.4       & 82.0 \\ 
CIFAR100\_IF10\_NR40 & Asymmetric & 68.0   & 68.9 \\
CIFAR100\_IF10\_NR60 & Symmetric  & 77.7  & 79.2  \\
\hlinew{1pt}
\end{tabular}
}
\vspace{-0.5em}
\end{table}

\vspace{-0.5em}
\section{Related Work}  
\label{sec:related}

\vspace{-0.5em}
\textbf{Noisy Label Learning.}
Noisy label learning can generally be categorized into three groups:
noisy sample separation, label correction, and noise regularization.
To separate noisy samples, various metrics have been explored, including loss values~\citep{2018Mentornet, han2018Co-teachingt, li2020Dividemix}, logit values~\citep{pleiss2020ide}, Jensen-Shannon divergence~\citep{Yao2021Jo-src, Karim2022Unicon}, and the distance between sample features and class prototypes~\citep{xinyuan2024fedfixer}.
For label correction, many methods aim to revise noisy annotations. For example, partial label learning~\citep{meng2024pllnl, wang2022aggd, Feng2019sgr} assumes each training instance is associated with a candidate label set that contains the true label. PLS~\citep{albert2023pls} further incorporates the confidence of the corrected labels. 
Other methods estimate the noise transition matrix~\citep{Yao2020dual, Cheng2022class, yexiong2024llcs} to infer the label most likely to be correct.
In terms of noise regularization, several strategies have been proposed to reduce the influence of noisy labels. 
These include re-weighting training examples~\citep{Mengye2018Learning, liu2015class} and designing noise-robust loss functions, such as backward and forward loss correction~\citep{Patrini2017mak}, gold loss correction~\citep{Dan2018using}, MW-Net~\citep{Shu2019mwnet}, and Dual-T~\citep{Yao2020dual}.

\vspace{-0.5em}
\textbf{Long-Tailed Noisy Label Learning.} 
Several recent studies have been proposed to address the compounded challenges arising from the coexistence of noisy labels and unbalanced/long-tailed data.
A common strategy is to identify and separate clean and noisy samples.
For example, CNLCU~\citep{xia2022sample} enhances the small loss method~\citep{han2018Co-teachingt} by designating a subset of samples with large losses as clean.
TABASCO~\citep{lu2023tabasco} addresses a complex scenario where noisy labels may cause an intrinsic tail class to appear as a head class, proposing a bi-dimensional separation metric to adapt to different cases. 
SSBL$_2$~\citep{zhang2024ssbl} separates noisy samples based on model prediction confidence and loss values. 
Another line of work focuses on emphasizing the importance of different samples by reweighting or regularization~\citep{Mengye2018Learning, Shu2019mwnet, cao2021heter, wei2022prototypical, jiang2022delving, chen2025robustla}.
Meanwhile, improving representation learning~\citep{zhou2022prototype, yi2022identifying, zhang2023rcal, li2024extracting} explores intrinsic feature-space structures to mitigate noise propagation.
Despite their merits, these methods underutilize the observed labels that implicitly contain weak but valuable information about the underlying ground-truth.
Additional discussion of long-tailed learning is provided in Appendix~\ref{app:LT-RW}.

A discussion towards limitations of CARE and directions for future work is provided in Appendix~\ref{app:limitation}.

\section{Conclusion}
\label{sec:con}
In this paper, we have demonstrated that simply rectifying noisy labels in a class-agnostic way does not guarantee improved learning for tail classes, nor does it necessarily enhance overall model performance. 
The challenge of correcting noisy labels in tail classes is even more pronounced due to their inherent sparsity and underrepresentation. 
To address this, we propose Class-Adaptive Rectification with Experts (CARE), which leverages a class-aware expert-consensus strategy to effectively rectify noisy labels across all classes. 
By ensuring more reliable supervision, CARE enables the model to better capture representative features for each category, ultimately boosting overall performance. 

\section*{Acknowledgements}
This work was supported in part by National Key R\&D Program of China (2024YFB3908500, 2024YFB3908502), NSFC (62306181), Guangdong Basic and Applied Basic Research Foundation (2024A1515010163), Shenzhen S\&T Program (RCBS20231211090659101), and National Key Lab of Radar Signal Processing (JKW202403).

\section*{Impact Statement}
This paper presents work whose goal is to advance the field of Machine Learning. Our research focuses on enhancing the robustness of models under long-tailed and noisy label distributions. 
By enabling effective learning from imperfect, real-world data, this work has the potential to improve the reliability of visual recognition systems in practical settings where high-quality data curation is expensive or infeasible. We do not foresee any specific negative societal consequences or ethical issues arising directly from this work, but we encourage responsible use of robust learning frameworks in safety-critical applications.


\bibliography{example_paper}
\bibliographystyle{icml2026}

\newpage
\appendix
\onecolumn
\section*{Appendix: Class-Adaptive Expert Consensus for Reliable Learning with Long-Tailed Noisy Labels}

\section{Detailed Problem Analysis for Long-Tailed Noisy-Label Learning} \label{app_sec:analysis}
In long-tailed noisy-label learning, performance degradation on tail classes arises from the compounding effects of \textit{intrinsic uncertainty} due to sample scarcity and \textit{extrinsic uncertainty} caused by label noise.
We denote the observed label $\tilde{y}$ as a noisy version of the ground-truth label $y$, sampled from a corruption distribution $\eta(y \mid \tilde{y})$. 
Due to long-tailed distribution, the number of samples per class $n_c$ varies greatly, with $n_c \ll n_h$ for tail ($c \in \mathcal{T}$) and head ($h \in \mathcal{H}$) classes, respectively.

We summarize the core challenges as follows:

\textbf{Class-Agnostic Label Correction.}  
Take the label correction methods~\citep{han2018Co-teachingt, li2020Dividemix, LienenH24Mitigating, BaikYKC24DaSC} as an example, noise-robust methods rely on a global confidence threshold $\tau$:
\begin{equation}
\mathcal{D}_{\text{clean}} = \{(x_i, \tilde{y}_i) \mid \max_{c} p(c \mid x_i) > \tau \}.
\end{equation}
However, for tail classes ($c \in \mathcal{T}$), the estimated confidence $p(c \mid x)$ tends to be lower due to fewer samples:
\begin{equation}
\mathbb{E}[p(c \mid x)] \propto n_c.
\end{equation}
Thus, using a class-agnostic $\tau$ results in the disproportionate rejection of valid tail samples.

\textbf{Confusion Between Uncertainty and Noise.}  
Let $\sigma_c^2 = \text{Var}[p(c \mid x)]$ denote prediction variance for class $c$. Tail classes often have higher $\sigma_c^2$, yet this uncertainty is mistaken as noise:
\begin{equation}
\text{Uncertainty}(x) = 1 - \max_{c} p(c \mid x).
\end{equation}
High $\text{Uncertainty}(x)$ does not always imply label corruption for tail samples. 
This leads to erroneous filtering:
\begin{equation}
\text{Pr}[\tilde{y}_i = y_i \mid \text{Uncertainty}(x_i) > \delta] \gg 0,\quad \text{for } c = \tilde{y}_i \in \mathcal{T}.
\end{equation}

\textbf{Amplified Mislabeling Impact via Rebalancing.}  
Reweighting or logit adjustment methods amplify loss from under-represented classes:
\begin{equation}
\mathcal{L}(x_i, \tilde{y}_i) = \alpha_{c} \cdot \ell(f(x_i), \tilde{y}_i),\quad \alpha_{c} \propto \frac{1}{n_c}.
\end{equation}
However, since noisy label rate $\rho_c$ is typically higher for tail classes:
\begin{equation}
\rho_c = \text{Pr}(\tilde{y} \ne y \mid y = c),\quad \rho_c \uparrow \text{ as } n_c \downarrow,
\end{equation}
which leads to over-calibration driven by incorrect labels from tail classes.

\textbf{Summary.}  
Due to the imbalance in confidence, variance, and noise rate, class-agnostic correction and rebalancing strategies result in:
\begin{enumerate} [label=\arabic*), leftmargin=*, nosep]
\item Over-filtering of clean tail samples,
\item Retention of confident noisy head labels,
\item Amplification of noise-induced gradient bias in tail classes.
\end{enumerate}

To address this, our proposed CARE computes a class-specific confidence consensus that leverages semantic priors and adjusts to category frequency, thereby disentangling class uncertainty from annotation noise and enabling effective distribution refinement.

\section{Algorithm for CARE}
\label{app_sec:alg}
The training procedure for CARE is outlined in \cref{alg:consensus}.
\begin{algorithm}[htpb]
\caption{Class-Aware Rectification with Experts}
\label{alg:consensus}
\begin{algorithmic}[1]
\REQUIRE Training set $\mathcal{D}=\{x_i,\tilde{y}_i\}$, pre-trained model
\ENSURE {Fine-tuned model}
\STATE Initialize \{$y_i^{r,(0)}\}_{i=1}^N \gets \{\tilde{y}_{i}\}_{i=1}^N$, candidate count vector $\mathbf{F} \in \mathbb{R}^{N \times C} \leftarrow \{\textbf{p}_i^{BE}\}_{i=1}^N$, class frequency ${n^{r,(0)}_c} = \sum_{i=1}^{N} \delta(\tilde{y}_i = c)$, fine-turning module $\phi$.
\FOR{$e = 1$ to $E$}
    \STATE $K_c \gets f(n^e_{c})$, where $f(*)$ is monotonically increasing function , for example, $f(x)  = \left\lfloor (x)^{1/4} \right\rfloor$.
    \STATE $\mathcal{T}_{i}^{TE,(e)} \gets \text{Top-K}(\mathbf{p}^{\text{IE}}_i|K_c)$, $\mathcal{T}_{i}^{IE,(e)}\gets \text{Top-K}(\mathbf{p}^{\text{TE}}_i|K_c)$
    \STATE Update $\mathbf{F}_{i}^{(e)}$ for $x_i$ according to \cref{eq:freq_update} 
    \STATE $y^{r,(e)}_i \gets \arg\max_{c} \mathbf{F}_i^{(e)}$
    \STATE Update $n^{r,(e)}_c$ according to \cref{eq:rld} 
    \STATE Compute the final loss $\mathcal{L}$ using $\{x_i, y^{r,(e)}_i\}$ and $\{n^{r,(e)}_c\}_{c=1}^C$.
    \STATE Update $\phi$ by $\phi^{e+1} = \phi^{e} - \alpha \cdot \nabla \mathcal{L}$
\ENDFOR
\end{algorithmic}
\end{algorithm}

\section{Proof of \protect\cref{thm:EC}} 
\label{app_sec:proof1}

\begin{proof}
We decompose the probability of consensus on class $y$ as follows:
\begin{equation}
\Pr(y \in \mathcal{T}^{\text{TE}}_{K_c} \cap \mathcal{T}^{\text{IE}}_{K_c}) = \Pr(y \in \mathcal{T}^{\text{TE}}_{K_c}) \cdot \Pr(y \in \mathcal{T}^{\text{IE}}_{K_c}).
\end{equation}
Assuming $\mathbf{p}^{\text{TE}}$ and $\mathbf{p}^{\text{IE}}$ are conditionally independent.
Under the assumption that $y$ has the highest expected confidence:
\begin{equation}
\mathbb{E}[\text{rank}(y)] < \mathbb{E}[\text{rank}(c)],\quad \forall c \ne y.
\end{equation}
Since $K_c \propto n_c^e$, frequent (head) classes are allowed higher $K_c$, which increases:
\begin{equation}
\Pr(y \in \mathcal{T}^{*}_{K_c}) = \sum_{k=1}^{K_c} \Pr[\text{rank}(y) = k].
\end{equation}
Top-$K$ inclusion probability increases with $K_c$.
For a noisy label $c \ne y$, the probability that both experts place it in Top-$K_c$ is:
\begin{equation}
\Pr(c \in \mathcal{T}^{\text{TE}}_{K_c}) \cdot \Pr(c \in \mathcal{T}^{\text{IE}}_{K_c}) \ll \Pr(y \in \mathcal{T}^{\text{TE}}_{K_c}) \cdot \Pr(y \in \mathcal{T}^{\text{IE}}_{K_c}),
\end{equation}
as $c$ is not favored in expectation.
Consequently,
\begin{equation}
\Pr(y \in \mathcal{T}^{\text{TE}}_{K_c} \cap \mathcal{T}^{\text{IE}}_{K_c}) \gg \Pr(c \in \mathcal{T}^{\text{TE}}_{K_c} \cap \mathcal{T}^{\text{IE}}_{K_c}).
\end{equation}
\end{proof}

\vspace{1em}
\section{Proof of Proposition \ref{prop:tail_consensus}} 
\label{app_sec:proof2}
\begin{proof}
We aim to show that for any tail class $t$ with low sample frequency $n_t^e$ at epoch $e$, using a smaller class-specific Top-$K$ value $K_t$ yields higher consensus precision compared to using a larger $K$.

\textbf{Notation}

- $\mathcal{D}_t$: the set of all samples belonging to tail class $t$.

- $E$: the set of experts.

- $\mathcal{R}_m(x, K)$: Top-$K$ classes predicted by expert $m$ for input $x$, i.e.,
  $$
  \mathcal{R}_m(x, K) := \texttt{TopK} (\mathbf{z}_m).
  $$

\textbf{Define}

- $\mathcal{P}_t(K)$: Total number of times class $t$ appears in the Top-$K$ predictions across all experts and all samples.

- $\mathcal{T}_t(K)$: Total number of correct Top-$K$ predictions for class $t$, i.e., predictions on samples from $\mathcal{D}_t$ where $t \in \mathcal{R}_m(x, K)$.

Then the \textbf{consensus precision} for class $t$ under Top-$K$ is defined as:
$$
\text{ConsensusPrecision}_t(K) := \frac{\mathcal{T}_t(K)}{\mathcal{P}_t(K)}.
$$

\textbf{1. Larger $K$ increases false positives for tail class $t$.}

Let $n_t^e = |\mathcal{D}_t|$ denote the number of samples of class $t$ at epoch $e$.  
Then,
\[
\mathcal{T}_t(K) = \sum_{x \in \mathcal{D}_t} \sum_{m \in E} \mathbf{1}[t \in \mathcal{R}_m(x,K)].
\]
For each sample from class $t$, at most $|E|$ experts can correctly include $t$ in their Top-$K$, as a result:
$$
\mathcal{T}_t(K) \leq |E| \cdot |\mathcal{D}_t|.
$$
This term is upper-bounded and cannot increase indefinitely. 
For $\mathcal{P}_t(K)$,
\begin{align*}
\mathcal{P}_t(K) &= \sum_{x \in \mathcal{D}_t} \sum_{m \in E} \mathbf{1}\big[t \in \mathcal{R}_m(x,K)\big] \\
&= \sum_{x \in \mathcal{D}_t} \sum_{m \in E} \mathbf{1}[t \in \mathcal{R}_m(x,K)] + \sum_{x \notin \mathcal{D}_t} \sum_{m \in E} \mathbf{1}[t \in \mathcal{R}_m(x,K)] \\
&=  \mathcal{T}_t(K) + \sum_{x \notin \mathcal{D}_t} \sum_{m \in E} \mathbf{1}[t \in \mathcal{R}_m(x,K)].
\end{align*}
The second term accounts for false positives contributed by non-$t$ samples. 
For a tail class $t$, the number of non-$t$ samples is much larger than that of $t$-samples, i.e., $|\mathcal{D} \setminus \mathcal{D}_t| \gg |\mathcal{D}_t|$.
The probability that $t$ appears in the Top-$K$ of a non-$t$ sample increases with $K$.  
Consequently, as $K$ grows, the contribution of false positives
$
\sum_{x \notin \mathcal{D}_t} \sum_{m \in E} \mathbb{P}[t \in \mathcal{R}_m(x,K)]
$
increases, while the number of correct predictions $\mathcal{T}_t(K)$ remains bounded by $|E| \cdot n_t^e$. 
Then we have
\begin{align*}
\frac{d \mathcal{P}_t(K)}{dK} &= \sum_{x \notin \mathcal{D}_t} \sum_{m \in E} \frac{\mathbf{1}[t \in \mathcal{R}_m(x,K)]}{dK} > \frac{d \mathcal{T}_t(K)}{dK} = 0,
\end{align*}
Therefore, for tail classes with small $n_t^e$, increasing $K$ inflates $\mathcal{P}_t(K)$ faster than $\mathcal{T}_t(K)$.

\textbf{Additional Observation: Saturation of correct predictions vs. growth of false positives.}

In contrast, as $K$ increases, the number of samples from other classes that mistakenly include $t$ in their Top-$K$ grows significantly, leading to an increase in:
$\mathcal{P}_t(K) - \mathcal{T}_t(K)$.

\textbf{2. Smaller $K$ suppresses the inclusion of $t$ on non-$t$ samples.}

Reducing $K$ limits the number of classes per prediction, which reduces the chance of incorrectly including rare class $t$ in the predictions for other classes. This slows the increase of $\mathcal{P}_t(K)$ while preserving most of $\mathcal{T}_t(K)$ for confident cases, thus improving:
$$
\text{ConsensusPrecision}_t(K) = \frac{\mathcal{T}_t(K)}{\mathcal{P}_t(K)}.
$$

\textbf{3. For tail classes, $\mathcal{P}_t(K)$ contains more noise under large $K$.}

Let $\epsilon_t(K)$ denote the expected proportion of false positives in $\mathcal{P}_t(K)$. For tail classes, as $K$ increases:
$$
\epsilon_t(K) := \frac{\mathcal{P}_t(K) - \mathcal{T}_t(K)}{\mathcal{P}_t(K)} \uparrow,
$$
indicating more of the Top-$K$ inclusions of $t$ are incorrect. Thus:
$$
\text{ConsensusPrecision}_t(K) = 1 - \epsilon_t(K) \downarrow.
$$

\textbf{Conclusion:}
To mitigate noisy inclusions and preserve precision for rare classes, a smaller class-specific $K_t$ should be adopted for tail class $t$:
$$
\text{ConsensusPrecision}_t(K_t) > \text{ConsensusPrecision}_t(K), \quad \text{for } K_t < K.
$$
\end{proof}

\section{Analysis of CE and LA loss in the results on mini-ImageNet}
\label{app:ce_la}


\begin{table}[htpb] 
    \centering
    \caption{Analysis of CE and LA loss in the results on mini-ImageNet.}
    \label{tab:ce_la}   
    \footnotesize
    \resizebox{0.4\linewidth}{!}{
    \setlength\tabcolsep{22pt}     
    \renewcommand{\arraystretch}{0.9}  
    \begin{tabular}{lcc} 
        \toprule
        \multirow{2}{*}{\textbf{Loss}} & \multicolumn{2}{c}{\textbf{Class Group}} \\
        \cmidrule(lr){2-3} 
         & Head & Tail \\
        \midrule
        CE & 87.3 & 81.2 \\
        LA & 81.7 & 85.2 \\
        \bottomrule
    \end{tabular}
    }
\end{table}

In the results on mini-ImageNet with an imbalance factor of 10, CE loss outperforms the LA loss.
An imbalance factor of 10 means that the sample size of the head classes is ten times that of the tail classes. 
However, mini-ImageNet is a relatively balanced and moderately sized dataset, and even under this level of imbalance, the tail classes still retain a reasonable number of training examples, typically in one hundred.
This relatively sufficient amount of data for tail classes allows the model to generalize reasonably well across both head and tail classes when trained with the CE loss. 
Consequently, the disparity in classification performance between head and tail classes is not particularly severe, and the CE loss can maintain strong overall performance without introducing bias toward any specific subset of classes.
Besides, LA explicitly modifies the logits during training to favor underrepresented classes by incorporating class prior distributions into the loss function. 
While this reweighting helps improve the performance on tail classes, it does so by effectively downscaling the contributions of head classes. 
In settings where the imbalance is mild and tail classes already have adequate data to learn from, this shift in focus can lead to overcompensation. 
As a result, the performance of head classes degrades significantly, overshadowing the marginal gains seen in tail-class accuracy. 
This trade-off ultimately leads to a decline in overall performance when using LA compared to CE.
For the results on mini-ImageNet with an imbalance factor of 10, we provide the accuracy of head and tail classes as \cref{tab:ce_la}.

\section{Complementary Analysis of CARE Components}
\label{app_sec:abl}
\begin{wrapfigure}[11]{r}{0.45\linewidth}
\vspace{-1.5em}
\centering
\includegraphics[width=0.8\linewidth]{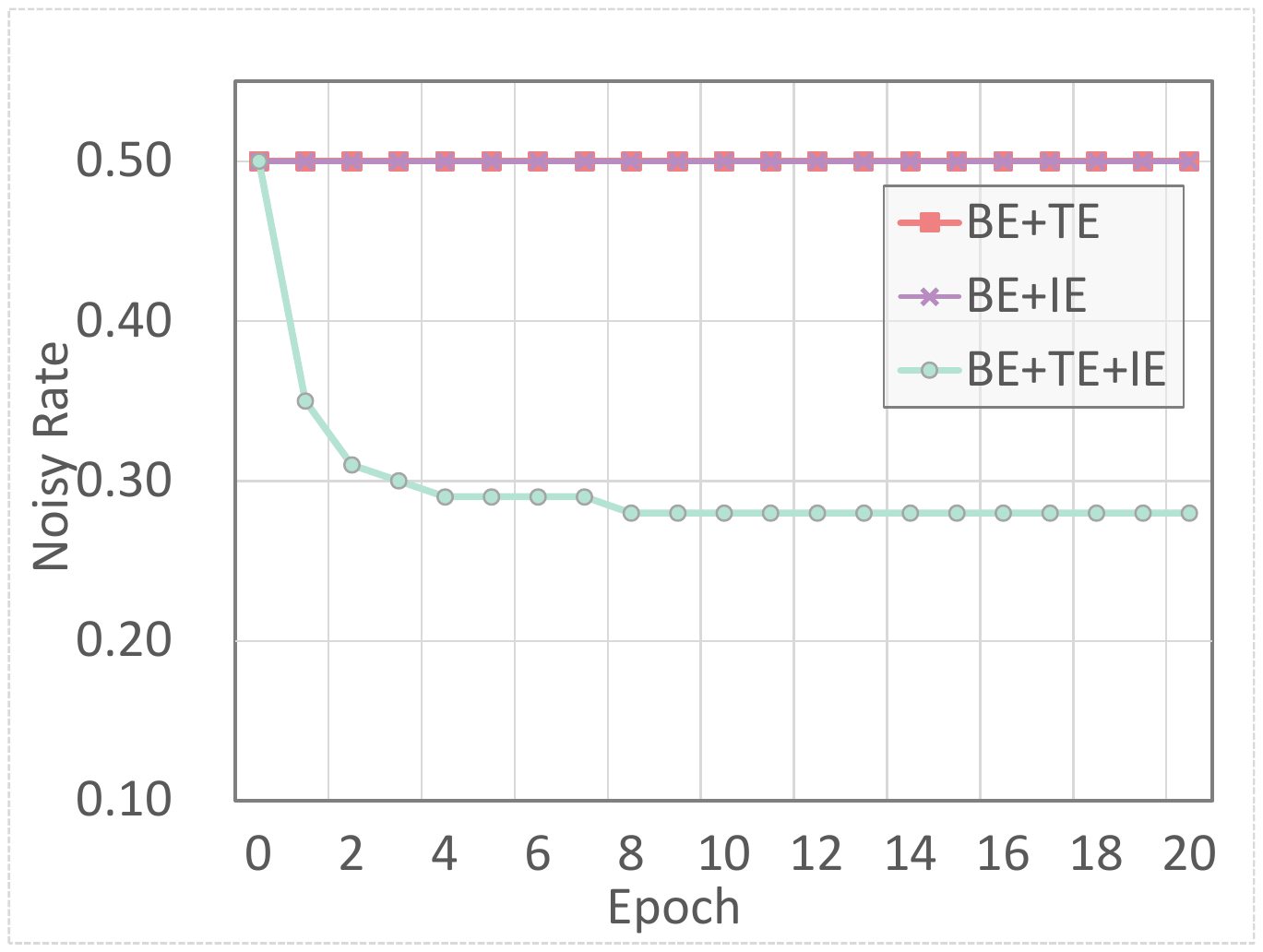}
\vspace{-0.5em}
\caption{Noise Rate Evolution across CARE Components.} %
\label{fig:noisy rate-ablation}
\end{wrapfigure}

\cref{fig:noisy rate-ablation} complements \cref{tab:ablation1} (in \cref{sec:further_ana}) by illustrating how the label noise ratio evolves across epochs.
It further demonstrates that a single expert alone is insufficient to alter the base expert (BE).
Since the confidence scores from the image expert (IE) and text expert (TE) remain consistently below 1.
Consequently, their individual contributions to the frequency accumulation vector $\textbf{F}$ (defined in \cref{eq:freq_update}) are insufficient to surpass the accumulated support from BE. 
The single expert cannot accumulate enough frequency to override BE and induce label correction
Collaborative voting among multiple experts for robust label refinement is necessary.

\section{Analysis of Cumulative Frequency-Based Correction with Varying BE Weight}
\label{app:cum_fre}

\begin{table}[ht] 
    \centering
    \caption{ Expert Combinations Effect on Noisy Label Correction (BE Weight = 0.5, Symmetric Noise). NR: Noise Rate after correction.}
    \label{tab:abl_expert2}
    \small
    \setlength\tabcolsep{8pt} 
    \begin{tabular}{lcccc}
        \toprule
        \multirow{2}{*}{\textbf{Dataset}} & \multicolumn{2}{c}{\textbf{BE + TE}} & \multicolumn{2}{c}{\textbf{BE + IE}} \\
        \cmidrule(lr){2-3} \cmidrule(lr){4-5} 
         & NR ($\downarrow$) & Acc. ($\uparrow$) & NR ($\downarrow$) & Acc. ($\uparrow$) \\
        \midrule
        CIFAR100-IF10-NR50 & 31.8 & 79.8 & 29.3 & 79.7 \\
        \bottomrule
    \end{tabular}
\end{table}

To further clarify the core mechanism of our correction strategy, which relies on cumulative frequency comparison, we analyze the case where the BE weight is set to 1.
In this setting, correcting noisy labels in BE requires the cumulative frequency of predictions from IE and TE to exceed that of the noisy labels in BE.
As shown in \cref{tab:ablation1} (in \cref{sec:further_ana}), neither IE nor TE alone can accumulate sufficient clean signals to achieve this correction.
We then conducted additional experiments with the BE weight reduced to 0.5.
The results in \cref{tab:abl_expert2} demonstrate that, under this setting, either IE or TE alone is sufficient to correct noisy labels and improve model performance.

\section{Ablation on the Form of Top-$K$ Predictions}
\label{app:abl_k}

\begin{table*}[htpb]
\footnotesize
\centering
\caption{Ablation Study on $K$ (CIFAR-100-LTN, IF=100, 20\%, Symmetric Noise).}
\label{tab:abl_k}
\resizebox{0.45\textwidth}{!}{
\setlength\tabcolsep{12pt}  
\renewcommand{\arraystretch}{1.}
\begin{tabular}{l|c}
\hlinew{1pt}
$K$ & Acc. (\%) \\
\hline
CLIP+LA (baseline) & 76.0 \\ 
\cdashline{1-2}
Step form (\texttt{H:8 M:4 T:1}) & 76.5 \\ 
Step form (\texttt{H:8 M:3 T:2}) & 76.3 \\
Step form (\texttt{H:7 M:4 T:2}) & 76.4 \\
Step form (\texttt{H:8 M:4 T:2}) & 76.7 \\ 
Step form (\texttt{H:9 M:4 T:2}) & 76.9 \\ 
Step form (\texttt{H:8 M:5 T:2}) & 76.3 \\ 
Step form (\texttt{H:8 M:4 T:3}) & 76.6 \\ 
Exponential form ($K_i = n_i^{0.25}$) & 77.3 \\ 
Logarithmic form ($K_i = \log(n_i)$) & 77.3 \\ 
Linear form  & 77.5 \\ 
\hlinew{1pt}
\end{tabular}
}
\end{table*}

The $K$ value for each class is proportional to the number of samples in that class. Specifically, it is computed as the one-fourth power of the number of samples in that class. 
We empirically adopt this power-law form as a practical choice. 
We also conducted ablation experiments on the calculation form of the hyperparameter $K$, and the results are shown as \cref{tab:abl_k}. 
``\texttt{H}'', ``\texttt{T}'' and ``\texttt{T}'' refer to Head, Medium, and Tail classes, and $n_i$ denotes the number of samples in class i. 
The linear form is computed as $K_i := \dfrac{n_i-n_{min}}{(n_{max}-n_{min})}*(K_{max}-K_{min})+K_{min}$.
$n_{max}$ and $n_{min}$ represent the maximum and minimum sample counts, which are set to 9 and 1, respectively. 
The experimental results demonstrate that different forms of $K$, particularly those proportional to class frequency, yield stable and improved performance. 

\begin{table*}[htpb]
\centering
\small
\caption{Comparison of class-wise $K$ and global $K$.}
\label{tab:global_k}
\resizebox{0.9\linewidth}{!}{
\setlength\tabcolsep{10pt}
\renewcommand{\arraystretch}{1.2}  
\begin{tabular}{l|cccc|cccc}
\hlinew{1pt}
\multirow{2}{*}{\diagbox{$K$}{Datasets}} 
    & \multicolumn{4}{c|}{CIFAR-100-LTN ($IF=10$, $NR=20$, AN)} 
    & \multicolumn{4}{c}{ImageNet-LTN$^r$ ($IF$ = 100, $NR$=40)} \\
\cline{2-9}
 & ALL & Head & Med & Tail & ALL & Head & Med & Tail\\
\hline
Global $K=4$     & 80.7 & 81.0 & 82.7 & 77.9 & 80.7 & 81.1 & 96.0 & 73.6 \\
Global $K=8$     & 80.3 & 80.3 & 82.7 & 77.3 & 80.2 & 80.9 & 96.0 & 74.4 \\
Class-wise $K_c$ & 80.8 & 80.9 & 82.5 & 78.6 & 81.9 & 83.0 & 94.5 & 74.8 \\
\hlinew{1pt}
\end{tabular}
}
\end{table*}

Global vs. Class-wise $K$ (Table \ref{tab:global_k}): Using a global $K$ for all classes improves overall accuracy compared to baseline (equivalent to $K = 1$), but the class-wise $K_c$ consistently achieves higher performance, particularly benefiting tail classes.

\section{Related Work of Long-Tail Learning} \label{app:LT-RW}
Long-tailed learning methods typically assume that datasets are correctly labeled~\citep{cui2019class}, and adopt class-wise strategies that can be broadly categorized into four levels~\citep{li2022advances, shi2024LIFT}.
Input level:
These approaches manipulate the training data to enhance learning, including re-weighting, re-sampling~\citep{cui2019class}, and data augmentation techniques~\citep{CubukED19AutoAugment, CubukED20Randaugment, wang2024kill, wang2024autoda}.
Representation level:
Methods at this level modify model training to decouple representation learning from classifier learning, such as in decoupled training~\citep{decouple20, mislas21} and BBN-based approaches~\citep{bbn20, DisAli21}, where features are learned on imbalanced data, followed by classifier retraining with class-balanced or reversed samples. 
Ensemble strategies are also used, including redundant ensembles~\citep{WangXD21RIDE, LiBL2022Trustworthy, liJ2022nested, Cai2021ACE} and complementary ensembling~\citep{bbn20, Cui2022reslt, zhao2024tfoh}, which leverage diverse expert outputs or data partitions.
Output level:
These methods directly adjust model outputs to model generalization and classifier performance. 
Logit adjustment~\citep{adjustment21, RenJW2020Balanced} re-scales predictions, while re-margining techniques~\citep{Kaidi2019ldam, LiMK2022KPS, adjustment21, LiMK2022GCL, zhao2025bmep} assign class-dependent margins to promote better tail-class separation.
Gradient level: 
CCSAM~\citep{zhou2023class} and ImbSAM~\citep{zhou2023imbsam} adapt Sharpness-aware Minimization~\citep{Foret21Sharpness} to strengthen tail-class learning by adjusting gradient contributions based on class frequency. 
GNM~\citep{liGNM2024} introduces parameter-independent perturbations to reduce head-class dominance during training.

\section{Generalization analysis of CARE}\label{app:addi_noise}

We evaluated two additional noise types across multiple datasets:

1. Standard Symmetric Noise (Uniform Noise): Its transition matrix is defined as
\[
T_{ii} = 1 - \eta \quad \text{(probability of keeping the correct label)}, 
\]
\[
T_{ij} = \frac{\eta}{\text{num\_classes} - 1}, \quad i \ne j 
\quad \text{(probability of flipping from class } i \text{ to class } j).
\]
2. Asymmetric Pair-Flip Noise: Instead of flipping to any random class, each class is flipped to one specific, visually similar class, modeling realistic inter-class confusion.

The experimental results are summarized in \cref{tab:addi_noise}.
As shown, CARE consistently improves performance under both standard symmetric noise and asymmetric pair-flip noise. 
Specifically, for standard symmetric noise (NR40 and NR60), CARE achieves performance gains of 3.3 and 4.0 points, respectively,  demonstrating its effectiveness in mitigating randomly distributed label corruption. 
Under the more challenging pair-flip noise (NR20 and NR40), which arises from class similarity, CARE still provides improvements of 1.5-3.7 points.
Overall, these results show that CARE generalizes well across different noise types, enhancing robustness to both random and structured label noise.

\begin{table*}[htpb]
\centering
\footnotesize
\caption{Experimental results (\%) of two additional noise types.}
\label{tab:addi_noise}
\resizebox{0.7\linewidth}{!}{
\setlength\tabcolsep{12pt}
\renewcommand{\arraystretch}{1.2}  
\begin{tabular}{l|c|cc}
\hlinew{1pt}
Datasets/Methods  & Noise type & w/o CARE & w/ CARE \\
\hline
CIFAR100\_IF100\_NR40 & Uniform Noise & 70.8 & 74.1 \\ 
CIFAR100\_IF100\_NR60 & Uniform Noise & 65.3 & 69.3 \\
CIFAR100\_IF100\_NR20 & Pair-Flip Noise  & 77.9  & 79.4  \\
CIFAR100\_IF100\_NR40 & Pair-Flip Noise  & 67.0   & 70.7  \\
\hlinew{1pt}
\end{tabular}
}
\end{table*}

\section{Analysis of Macro F1-score}\label{app:f1_score}

\begin{table*}[htpb]
\centering
\footnotesize
\caption{The Macro F1-score (\%) across multiple datasets.}
\label{tab:f1_score}
\resizebox{0.7\linewidth}{!}{
\setlength\tabcolsep{12pt}
\renewcommand{\arraystretch}{1.2}  
\begin{tabular}{l|c|cc}
\hlinew{1pt}
Datasets/Methods  & Noise type & w/o CARE & w/ CARE \\
\hline
CIFAR100\_IF100\_NR30 & Joint Noise & 77.2 & 78.3 \\ 
CIFAR100\_IF100\_NR50 & Joint Noise & 75.0 & 76.4 \\
CIFAR100\_IF10\_NR40 & Symmetric Noise  & 80.3  & 81.4  \\
CIFAR100\_IF10\_NR60 & Symmetric Noise  & 75.7   & 78.9  \\
CIFAR100\_IF10\_NR20 & Asymmetric Noise  & 80.0  & 80.7  \\
CIFAR100\_IF10\_NR40 & Asymmetric Noise  & 67.6   & 69.7  \\
\hlinew{1pt}
\end{tabular}
}
\end{table*}

Macro F1-score is the average of the F1-score computed independently for each class, giving equal weight to all classes regardless of their size. 
By combining precision and recall, Macro F1-score captures how well a model balances false positives and false negatives across classes.
This makes it sensitive to class imbalance, as poor performance on minority classes cannot be offset by strong performance on majority classes.

We evaluated our method in terms of Macro F1-score(\%) across multiple datasets, as shown in \cref{tab:f1_score}.
CARE consistently improves Macro F1-score under all noise settings. For joint noise (NR30/NR50), it achieves gains of approximately 1-1.4 points. 
For symmetric noise (NR40/NR60), CARE achieves consistent improvements, including a substantial 3.2-point increase at $NR=60$. 
Under the more challenging asymmetric noise scenario (NR20/NR40), CARE continues to demonstrate measurable performance gains.
Overall, CARE reliably enhances class-balanced performance across different noise types and severities.

\section{Experimental results (\%) of Clothing1M}\label{app:clothing1m}

\begin{table*}[htpb]
\centering
\footnotesize
\caption{Experimental results (\%) of Clothing1M.}
\label{tab:clothing1m}
\resizebox{0.45\linewidth}{!}{
\setlength\tabcolsep{9pt}
\renewcommand{\arraystretch}{1.2}  
\begin{tabular}{l|cc}
\hlinew{1pt}
Datasets/Methods   & w/o CARE & w/ CARE \\
\hline
Clothing1M & 71.2 & 71.4 \\ 
\hlinew{1pt}
\end{tabular}
}
\end{table*}

The experimental results of Clothing1M are presented as in \cref{tab:clothing1m}.
CARE yields a modest improvement on Clothing1M (71.2 vs. 71.4), which can be attributed to the dataset’s abundant training samples and atypical class imbalance. 
Specifically, even the least frequent classes contain 19,743 samples, which is substantially higher than the tail classes in typical long-tailed datasets, indicating that these classes are already well represented. 
Consequently, the impact of methods designed to address tail-data scarcity is limited. 
Furthermore, in Clothing1M, some classes that are rare in the test set actually have a relatively large number of training samples, which is fundamentally different from real-world long-tailed distributions. 
In typical long-tailed scenarios~\cite{adjustment21,bbn20,li2024h2t}, each class is considered equally important, and tail classes are consistently underrepresented both in training and testing, leading to a relatively balanced test set that reflects the importance of all classes. 
This contrasts with Clothing1M, where the training set abundance for some rare test classes reduces the effect of tail scarcity.
Despite these factors, CARE still achieves a measurable gain, showing that the method remains effective even when the dataset deviates from the specific long-tailed, noisy conditions it is primarily designed for.

\section{Effect of Training Duration}
\label{app:epochs}
\begin{table*}[htpb]
\centering
\footnotesize
\caption{Accuracy (\%) across epochs.}
\label{tab:epochs}
\resizebox{0.4\linewidth}{!}{
\setlength\tabcolsep{9pt}
\renewcommand{\arraystretch}{1.2}  
\begin{tabular}{l|ccc}
\hlinew{1pt}
Epochs   & 15 & 20 & 30\\
\hline
Acc. (\%) & 80.8 & 81.5 & 82.0\\ 
\hlinew{1pt}
\end{tabular}
}
\end{table*}
We conducted additional experiments with 15, 20, and 30 epochs on CIFAR-100-LTN (Symmetric Noise, IF=10, NR=40\%). 
As shown in Table \ref{tab:epochs}, CARE is highly robust to training duration: it achieves a strong 80.8\% accuracy even at 15 epochs, and smoothly improves to 82.0\% at 30 epochs.
We adopted 20 epochs (81.5\%) as our default to optimally balance computational efficiency and state-of-the-art performance. 
By leveraging strong vision-language priors, CARE's rectification and representation learning converge significantly faster than from-scratch training. 
These results confirm that our method is not overly sensitive to exact epoch counts, making 20 epochs a highly reliable and efficient default setting.

\section{More Recent Methods}
\label{app:ibc}
\begin{table*}[htpb]
\centering
\footnotesize
\caption{Accuracy (\%) Comparison of IBC on CIFAR100-LTN.}
\label{tab:ibc}
\resizebox{0.6\linewidth}{!}{
\setlength\tabcolsep{9pt}
\renewcommand{\arraystretch}{1.2}  
\begin{tabular}{l|ccc}
\hlinew{1pt}
Datasets/Methods   & Noise type & CLIP+IBC & CLIP+CARE\\
\hline
CIFAR100\_IF10\_NR40 & Symmetric & 79.3 & 81.5\\ 
CIFAR100\_IF10\_NR40 & Asymmetric & 70.2 & 70.5\\
CIFAR100\_IF10\_NR40 & Joint & 70.2 & 76.7\\
\hlinew{1pt}
\end{tabular}
}
\end{table*}
We conduct a comparison with IBC~\citep{li2025boosting} (ICCV 2025). As shown in Table \ref{tab:ibc}, CLIP+CARE outperforms CLIP+IBC across all noise types. 
Even under the most challenging joint noise setting, CARE achieves a significant +6.5\% improvement over IBC (76.7\% vs. 70.2\%).

\section{Analysis of Noise Rates for Different Classes in RLD}\label{app:rld_noise}
\begin{table*}[htpb]
\centering
\footnotesize
\caption{Noise rates (\%) for different classes in RLD 1, 2, and 3.}
\label{tab:rld_noise}
\resizebox{0.4\linewidth}{!}{
\setlength\tabcolsep{9pt}
\renewcommand{\arraystretch}{1.2}  
\begin{tabular}{l|ccc}
\hlinew{1pt}
Methods/Classes   & Head & Med. & Tail\\
\hline
RLD1 & 17.4 & 21.6 & 70.2\\ 
RLD2 & 17.0 & 30.5 & 59.2\\
RLD3 & 27.9 & 27.8 & 31.2\\
\hlinew{1pt}
\end{tabular}
}
\end{table*}

The noise rates for samples of different frequencies in RLD 1, 2, and 3 (as shown in Table \ref{tab:acc_comp}) are presented in Table \ref{tab:rld_noise}.
Compared with RLD1 and RLD2, RLD3 exhibits a substantially more balanced noise-rate distribution across the head, medium, and tail categories. As shown in the table, the noise rates of RLD1 and RLD2 display a clear long-tailed pattern, where the tail classes suffer from significantly higher noise (e.g., 70.2\% in RLD1 and 59.2\% in RLD2), while the head classes remain comparatively clean. In contrast, RLD3 reduces this discrepancy: the noise rates of the three frequency groups (27.9\% for head, 27.8\% for medium, and 31.2\% for tail) are closely aligned. This indicates that RLD3 mitigates the frequency-dependent bias of label noise, leading to a more uniform noise distribution. Such balance is beneficial for learning algorithms, as it prevents models from being disproportionately affected by high-noise tail classes.

\section{Limitation Analysis \& Future Work}
\label{app:limitation}

While CARE achieves consistent improvements, it requires maintaining class frequency distributions during training, introducing extra memory overhead.
Our future work will focus on optimizing memory usage to reduce this cost while preserving efficiency.
In addition, CARE has the potential to integrate additional expert sources, such as image-aware experts from contrastive learning, to further enhance label rectification and model performance. We also leave this as an avenue for future work.

\section{Computational Resources}
\label{app:com_res}
To ensure a fair comparison, all the experiments are executed on the following hardware: Intel (R) Xeon (R) Gold 5220, operating at 2.20GHz, equipped with 251GB RAM, and a single NVIDIA GeForce RTX 3090 GPU with 24 GB of memory. 



\end{document}